\documentclass{article}

\usepackage{titletoc}
\newcommand\DoToC{%
  \startcontents
  \printcontents{}{1}{\hrulefill\vskip0pt}
  \vskip0pt \noindent\hrulefill
  }

% if you need to pass options to natbib, use, e.g.:
\PassOptionsToPackage{numbers, compress}{natbib}
% before loading neurips_2023

% ready for submission
\usepackage[final]{neurips_2023}

% to compile a preprint version, e.g., for submission to arXiv, add add the

% [preprint] option:
%     \usepackage[preprint]{neurips_2023}

% to compile a camera-ready version, add the [final] option, e.g.:
%     \usepackage[final]{neurips_2023}

% to avoid loading the natbib package, add option nonatbib:
%    \usepackage[nonatbib]{neurips_2023}

\usepackage[utf8]{inputenc} % allow utf-8 input
\usepackage[T1]{fontenc}    % use 8-bit T1 fonts
\usepackage{hyperref}       % hyperlinks
\usepackage{url}            % simple URL typesetting
\usepackage{booktabs}       % professional-quality tables
\usepackage{amsfonts}       % blackboard math symbols
\usepackage{nicefrac}       % compact symbols for 1/2, etc.
\usepackage{microtype}      % microtypography
\usepackage{xcolor}         % colors

\usepackage{amsmath}
\usepackage{amssymb}
\usepackage{amsthm}
\usepackage{pifont}% http://ctan.org/pkg/pifont

\usepackage{graphicx}
\usepackage{bm}

\newtheorem{definition}{Definition}
\newtheorem{lemma}{Lemma}

\newtheorem{theorem}{Theorem}
\usepackage{enumitem}
\newcommand{\OUR}{\textbf{\texttt{NCTRL}}}
\newcommand{\ourtitle}{Temporally Disentangled Representation Learning under Unknown Nonstationarity}

\title{\ourtitle}
\DeclareMathOperator*{\argmax}{arg\,max}

\usepackage{caption}
\usepackage{subcaption}
\usepackage{wrapfig}
\usepackage{lipsum} % This package is used to generate filler text.
\usepackage{graphicx} % This package is used to handle images.
\usepackage{multirow}
\author{%
    Xiangchen Song$^{1}$\thanks{Part of the work was done while interning at Salesforce Research} \quad
    Weiran Yao$^{2}$ \quad
    Yewen Fan$^{1}$ \quad
    Xinshuai Dong$^{1}$ \quad
    Guangyi Chen$^{1,3}$ \\
    \textbf{Juan Carlos Niebles}$^{2}$ \quad
    \textbf{Eric Xing}$^{1,3}$ \quad
    \textbf{Kun Zhang}$^{1,3}$\\
    $^1$Carnegie Mellon University\\
    $^2$Salesforce Research\\
    $^3$Mohamed bin Zayed University of Artificial Intelligence
}

\begin{document}

\maketitle

\begin{abstract}
In unsupervised causal representation learning for sequential data with time-delayed latent causal influences, strong identifiability results for the disentanglement of causally-related latent variables have been established in stationary settings by leveraging temporal structure.
However, in \emph{nonstationary} setting, existing work only partially addressed the problem by either utilizing observed auxiliary variables (e.g., class labels and/or domain indexes) as side-information or assuming simplified latent causal dynamics. Both constrain the method to a limited range of scenarios.
In this study, we further explored the Markov Assumption under time-delayed causally related process in \emph{nonstationary} setting and showed that under mild conditions, the independent latent components can be recovered from their nonlinear mixture up to a permutation and a component-wise transformation, \emph{without} the observation of auxiliary variables. We then introduce \OUR, a principled estimation framework, to reconstruct time-delayed latent causal variables and identify their relations from measured sequential data only.
Empirical evaluations demonstrated the reliable identification of time-delayed latent causal influences, with our methodology substantially outperforming existing baselines that fail to exploit the nonstationarity adequately and then, consequently, cannot distinguish distribution shifts.
\end{abstract}
\section{Introduction}
Causal reasoning for time-series data is a long-lasting yet fundamental task~\cite{berzuini2012causality,ghysels2016testing,friston2009causal}.
The majority of the studies focus on the temporal causal discovery among observed variables~\cite{granger1980testing,gong2015discovering,hyvarinen2010estimation}.
However, in many real-world scenarios, the observed data (e.g., image pixels in videos) instead of having direct causal edges, are generated by some causally related latent temporal processes or confounders.
Learning causal relations has practical use cases, which benefit a lot of downstream tasks.
However, estimating latent causal structures among those unobserved variables purely from observations without appropriate class of assumptions is an extremely challenging task (i.e. the latent variables are generally not identifiable)~\cite{locatello2019challenging,hyvarinen1999nonlinear}.

Under the topic of unsupervised representation learning via nonlinear Independent Component Analysis (ICA), some strong identifiability results of the latent variables have been established~\cite{hyvarinen2016unsupervised,hyvarinen2017nonlinear,hyvarinen2019nonlinear,khemakhem2020variational,sorrenson2020disentanglement,halva2020hidden} by introducing side information such as class labels and domain indices. 
Specifically focusing on time-series data, history information is also widely used as the side information for the identifiability of latent processes~\cite{halva2021disentangling,klindt2020towards,yao2021learning,yao2022temporally}. 
However, existing studies mainly focused on and derived identifiability results in stationary settings~\cite{hyvarinen2017nonlinear,klindt2020towards} (Fig \ref{fig:pgm-a}) or nonstationary settings with explicitly observed domain indices~\cite{khemakhem2020variational,yao2021learning,yao2022temporally}~(Fig \ref{fig:pgm-b}).

\begin{figure}[t]
    \centering
    \begin{subfigure}[b]{0.24\textwidth}
        \centering
        \includegraphics[width=\textwidth]{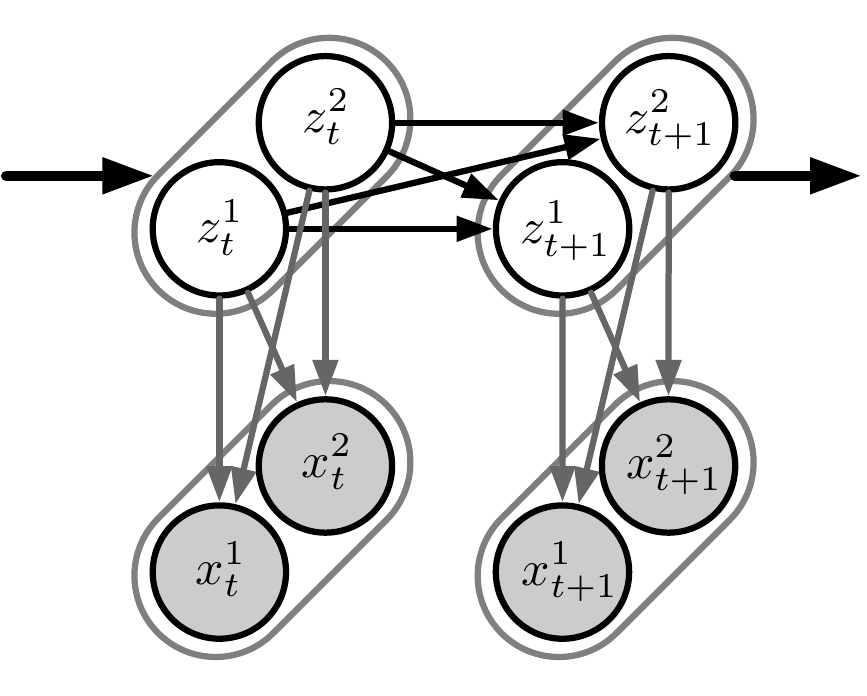}
        \caption{}
        \label{fig:pgm-a}
    \end{subfigure}
    \hfill
    \begin{subfigure}[b]{0.24\textwidth}
        \centering
        \includegraphics[width=\textwidth]{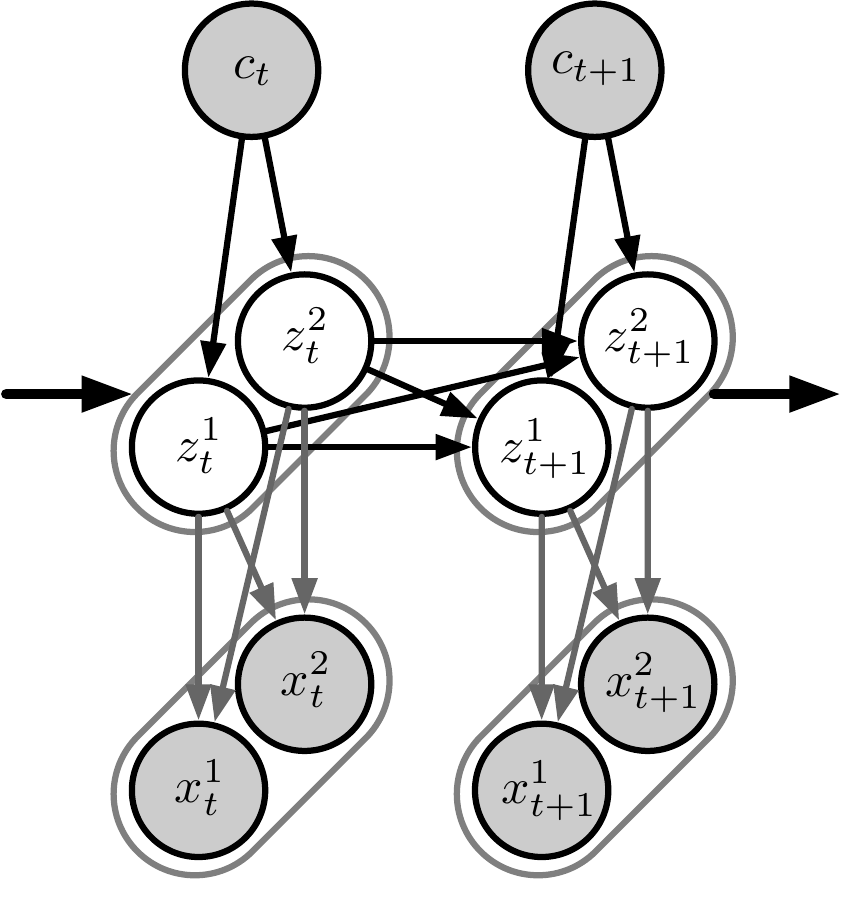}
        \caption{}
        \label{fig:pgm-b}
    \end{subfigure}
    \hfill
    \begin{subfigure}[b]{0.24\textwidth}
        \centering
        \includegraphics[width=\textwidth]{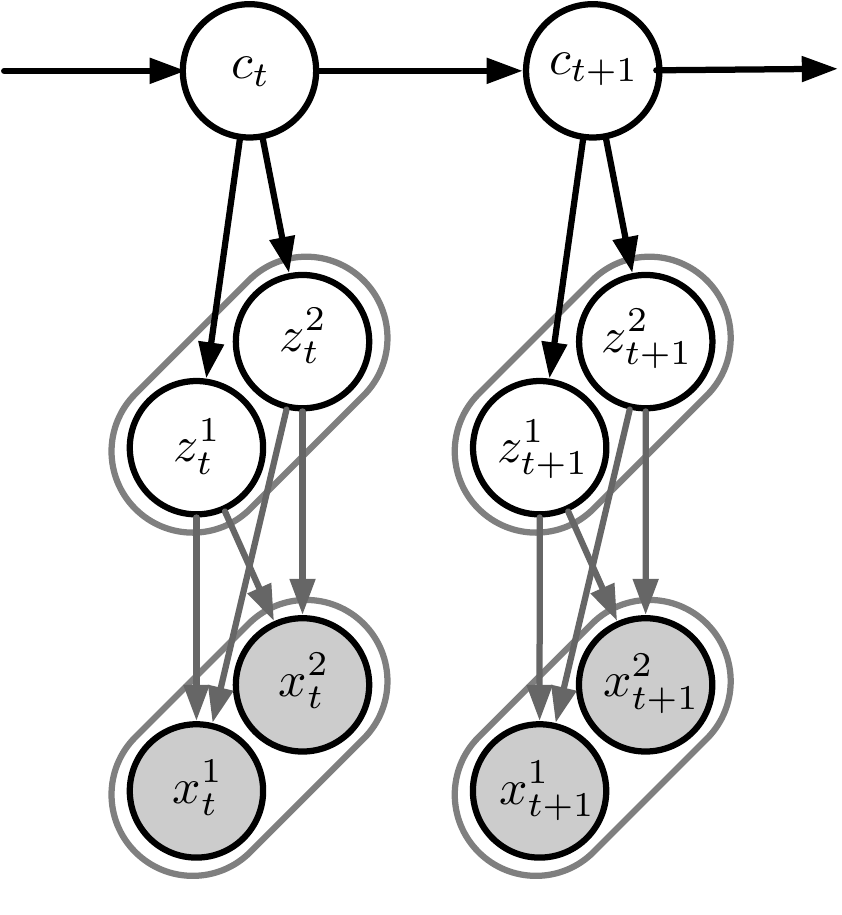}
        \caption{}
        \label{fig:pgm-c}
    \end{subfigure}
    \hfill
    \begin{subfigure}[b]{0.24\textwidth}
        \centering
        \includegraphics[width=\textwidth]{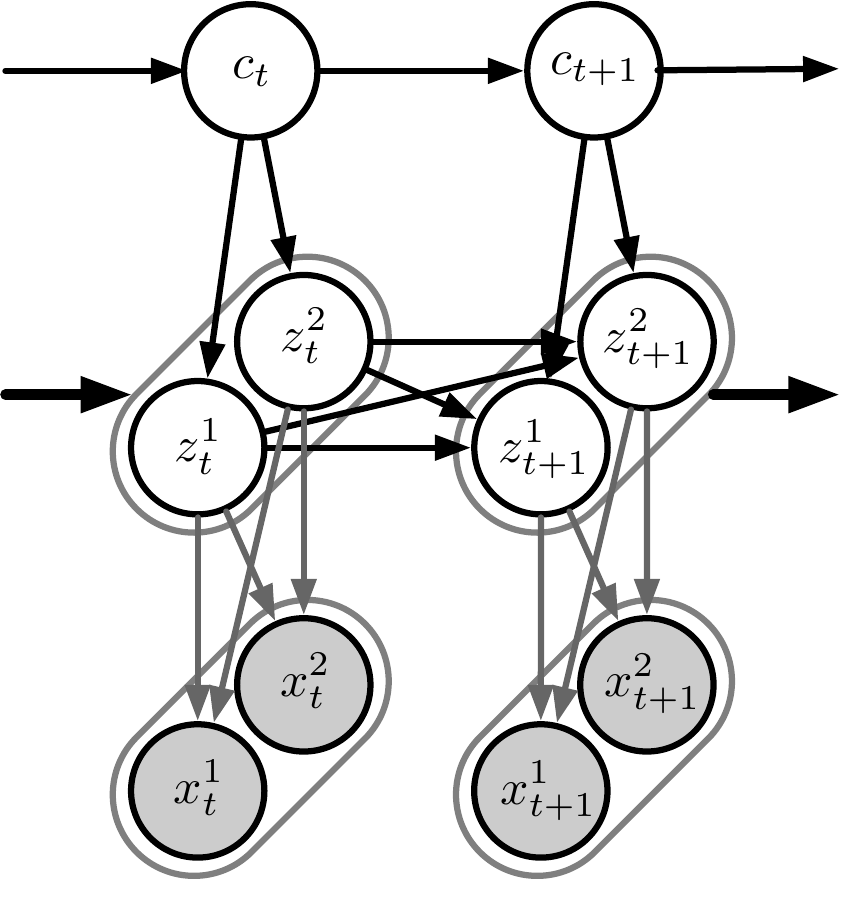}
        \caption{}
        \label{fig:pgm-d}
    \end{subfigure}

    \caption{Graphical models for different settings in causally related time-delayed time-series data with a visual illustration. (a) is a \emph{stationary} setting in which the transition function $\mathbf{z}_{t+1} = f_z(\mathbf{z}_t)$ remains universally the same. (b) is the setting widely explored in existing work, in which the transition function $f_{z}$ changes according to different domains (denoted as $c_t$), and all these domain indices are observed. (c) capture the unobserved domain indices by introducing a Markov chain on $c_t$. (d) is a more general form to model the time-series data in this work. It allows nonstationary settings and it does not require the domain indices to be observed. In all cases, the mapping from \(\mathbf{z}_t\) to \(\mathbf{x}_t\) is deterministic, which is indicated by a gray arrow.}
    \label{fig:pgm}
\end{figure}

% \begin{figure}[t]
%     \centering
%     \includegraphics[width=0.9\textwidth]{}
%     \caption{Graphic models for three different settings in causally related time-delayed time series data with a visual illustration. (a) is a \emph{stationary} setting in which the transition function $\mathbf{z}_{t+1} = f_z(\mathbf{z}_t)$ stays universally the same. (b) is the setting widely explored in existing work, in which the transition function $f_{z}$ changes according to different domains (denoted as $c_t$), and all those domain indices are observed. (c) capture the unobserved domain indices by introducing a Markov chain on $c_t$. (d) is a more general form to model the time series data in this work. It allows nonstationary settings and it doesn't require the domain indices to be observed.}
%     \label{fig:pgm}
% \end{figure}

One can immediately tell the infeasibility of those two scenarios that general time-series data is usually nonstationary and the side information~(class labels and domain indices) is usually unobserved.
That is particularly true when considering real-world data such as video or signal sequences. It doesn't make any sense to assume that there exists a stationary transition function that is applied to the whole video clip. Take a very simple video clip of a mouse\footnote{ \url{https://dattalab.github.io/moseq2-website/images/sample-extraction.gif}.}~\cite{WILTSCHKO20151121} as an example, it is fairly clear that such a simple motion example can be divided into at least two phases (1) active phase in which the mouse is moving and (2) inactive phase in which the mouse is laying down. Instead of using a complex transition function to describe the whole video clip, a more reasonable assumption is that the same transition function is shared within the same phase, but across different phases, the transition functions are different, in other words, the transition function can be expressed as a function of the domain index. Also, it is worth mentioning that if such domain or phase indices is latent or unobserved, then we cannot directly utilize the existing framework to learn the latent causal dynamics. That is again a more realistic case that in general, the domain indices within a video are not accessible without expensive human annotation.

Recently, HMNLICA~\cite{halva2020hidden} attempted to resolve the problem by introducing Markov Assumption on the nonstationary discrete domain variable, they assumed the domain indices follow a first-order Markov Chain and estimated the domain information purely from observed data. However, HMNLICA assumes temporally mutually independent sources in the data-generating process (conditioning on domain indices), i.e. they don't allow latent variables to have time-delayed causal relations in between (Fig \ref{fig:pgm-c}). Such an assumption imposed a huge negative impact on the usability of those methods. Considering the video of the little mouse example, the $\mathbf{x}_t$s are the observed video frames, $\mathbf{z}_t$s can be the independent motion dynamics or causal process such as position, velocity, (angular) momentum, etc, and $c_t$s are the phases or actions such as standing up~(active) and laying down~(inactive). To accommodate for such general sequential data, time-delayed temporal dependence should be considered in the latent $\mathbf{z}_t$ space (Fig \ref{fig:pgm-d}, otherwise, it is impossible to model a complex video data's temporal relation purely from discrete, domain indices. Also to make sure that the latent independent components can be recovered, temporally conditional independence should also be enforced, i.e. Each dimension of $\mathbf{z}_t$ is conditionally independent given the history $\mathbf{z}_{\text{history}}$. To this end, a natural question is:
\begin{center}
	{\bf \textit{How can we establish identifiability of nonlinear ICA for general sequential data with}}\\
        {\bf \textit{nonstationary causally-related process without observing auxiliary variable?} } 
 % \\{\bf \textit{} }
\end{center}
% formulate question 
To answer this question, we first formulate the latent nonstationary states as a discrete Markov process and further explore the Markov Assumption~\cite{gassiat2016inference} which is introduced for identifiability of nonlinear ICA in HMNLICA~\cite{halva2020hidden} and use stronger identifiability result corresponding to the conditional emission distribution~(i.e. the transition function of different domains) and the transition matrix of the Markov process. Specifically, we leverage the identifiability of Autoregressive Hidden Markov Models in \cite{balsells-rodas2023on} to accommodate time-delayed causally-related non-parametric transitions in latent space. Then we utilize the linear independence (Thm. \ref{thm2: identifiability of z}) to further establish the identifiability of $\mathbf{z}_t$.

The main contributions of this work can be summarized as follows:
\begin{itemize}
    \item To our best knowledge, this is the first identifiability result that can handle the nonstationary time-delayed causally-related latent temporal processes without the auxiliary variable. We formulate the problem, especially the nonstationary states into the Markov process, establish identifiability purely from observed data, and then show strong identifiability of latent independent components.
    \item We present \OUR, \underline{N}onstationary \underline{C}ausal \underline{T}emporal  \underline{R}epresentation \underline{L}earning, a principled framework to recover time-delayed latent causal variables and identify their relations from measured sequential data under unobserved different distribution shifts.
    \item Experiments on both synthetic and real-world datasets demonstrate the effectiveness of the proposed method in recovering the latent variables.
\end{itemize}

\section{Problem Formulation}
\subsection{Time Series Generative Model}
Assume we observe $m$-dimensional time-series data at discrete time steps, $\mathbf{X} = \{\mathbf{x}_1, \mathbf{x}_2,\dots,\mathbf{x}_T\}$, where each $\mathbf{x}_t \in \mathbb{R}^m$ is generated from time-delayed causally related hidden components $\mathbf{z}_t \in \mathbb{R}^{n}$ by the invertible mixing function:
\begin{equation}
    \mathbf{x}_t = \mathbf{g}(\mathbf{z}_t).
    \label{Eq:mixing function}
\end{equation}
In addition to latent components $\mathbf{z}_t$, there is an extra hidden variable $c_t$ which is discrete with cardinality $\vert \, c_t \, \vert = C$, it follows a first-order Markov process controlled by a $C \times C$ transition matrix $\mathbf{A}$, in which the $i,j$-th entry $A_{i,j}$ is the probability to transit from state $i$ to $j$. 
\begin{equation}\label{Eq:c_t}
    c_1,c_2,\dots,c_t \sim \text{Markov Chain}(\mathbf{A})
\end{equation}

For $i \in \{1,\dots,n\}$, $z_{it}$, as the $i$-th component of $\mathbf{z}_t$, is generated by (some) components of history information $\mathbf{z}_{t-1}$, discrete nonstationary indicator $c_t$, and noise $\epsilon_{it}$\footnote{Here for different values of \(c_t\) the corresponding transition functions \(f_i\) are not identical. Otherwise the states can be merged.}.
\begin{equation} \label{Eq:transition function of z}
    z_{it} = f_{i}(\{z_{j,t-1} \, \vert \, z_{j,t-\tau} \in \mathbf{Pa}(z_{it})\},c_t,\epsilon_{it})\hspace{1em} with \hspace{1em} \epsilon_{it} \sim p_{\epsilon_i} 
\end{equation}
where $\mathbf{Pa}(z_{it})$ is the set of latent factors that directly cause $z_{it}$, which can be any subset of $\mathbf{z}_{\text{Hx}}=\{\mathbf{z}_{t-1}, \mathbf{z}_{t-2}, \dots,\mathbf{z}_{t-L}\}$ up to history information maximum lag $L$. The components of $\mathbf{z}_{t}$ are mutually independent conditional on $\mathbf{z}_{\text{Hx}}$ and $c_t$.

\subsection{Identifiability of Latent Causal Processes and Time-Delayed Latent Causal Relations}
We define the identifiability of time-delayed latent causal processes in the representation function space in \textbf{Definition 1}. Furthermore, if the estimated latent processes can be identified at least up to permutation and component-wise invertible nonlinearities, the latent causal relations are also immediately identifiable because conditional independence relations fully characterize time-delayed causal relations in a time-delayed causally sufficient system, in which there are no latent causal confounders in the (latent) causal
processes. Note that invertible component-wise transformations on latent causal processes do not change their conditional independence
relations.

\begin{definition}[Identifiable Latent Causal Processes]
Formally let $\mathbf{X} = \{\mathbf{x}_1, \mathbf{x}_2, \dots, \mathbf{x}_T\}$ be a sequence of observed variables generated by the true temporally causal latent processes specified by $(f_i, p({\epsilon_i}), \mathbf{A}, \mathbf{g})$ given in Eqs.~\eqref{Eq:mixing function}, \eqref{Eq:c_t}, and ~\eqref{Eq:transition function of z}. A learned generative model $(\hat{f}_i, \hat{p}({\epsilon_i}),\hat{\mathbf{A}}, \hat{\mathbf{g}})$ is observationally equivalent to $(f_i, p({\epsilon_i}),\mathbf{A}, \mathbf{g})$ if the model distribution $p_{\hat{g}, \hat{p}_\epsilon, \hat{\mathbf{A}},\hat{\mathbf{g}}}(\{\mathbf{x}_1,\mathbf{x}_2,\dots,\mathbf{x}_T\})$ matches the data distribution $ p_{f_i, p_\epsilon,\mathbf{A},\mathbf{g}}(\{\mathbf{x}_1,\mathbf{x}_2,\dots,\mathbf{x}_T\})$ everywhere. We say latent causal processes are identifiable if observational equivalence can lead to identifiability of the latent variables up to permutation $\pi$ and component-wise invertible transformation $T$:
\begin{equation}\label{eq:iden}
\begin{aligned}
p_{\hat{f}_i, \hat{p}_{\epsilon_i}, \hat{\mathbf{A}},\hat{\mathbf{g}}}(\{\mathbf{x}_1,\mathbf{x}_2,\dots,\mathbf{x}_T\}) &=  p_{f_i, p_{\epsilon_i}, \mathbf{A}, \mathbf{g}} (\{\mathbf{x}_1,\mathbf{x}_2,\dots,\mathbf{x}_T\}) \\
\Rightarrow \hat{\mathbf{g}}^{-1}(\mathbf{x}_t) &= T \circ \pi \circ \mathbf{g}^{-1}(\mathbf{x}_t), \quad \forall \mathbf{x}_t \in \mathcal{X},
\end{aligned}
\end{equation}
where $\mathcal{X}$ is the observation space.
\end{definition}

\begin{wrapfigure}{r}{0.3\textwidth}
    \vspace{-1.5em}
    \centering
    \includegraphics[width=0.28\textwidth]{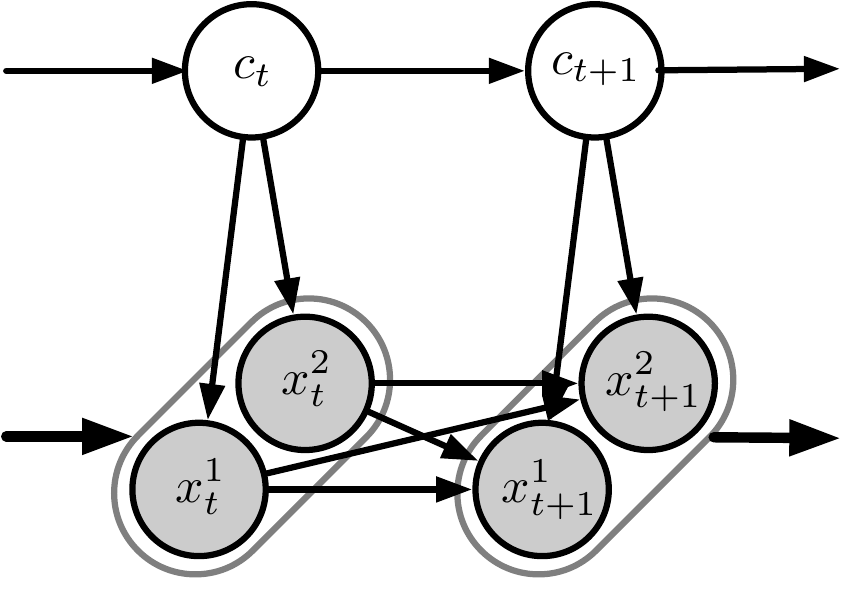}
    \caption{Graphical model used in Lemma 1, which only considers \(c_t\) and \(\mathbf{x}_t\).}
    \label{fig:lemma1}
    \vspace{-1.5em}
\end{wrapfigure}
\section{Identifiability Theory}\label{sec:theory}
In this section, we showed that under mild conditions, the latent variable $\mathbf{z}_t$ is identifiable up to permutation and a component-wise transformation.
The theoretical results can be divided into two parts (1) identifiability of the nonstationarity and (2) identifiability of the independent components. As introduced above, the major challenge comes from the unobserved domain indices or nonstationary indicators ($c_t$ in our graphic models). We leverage the identifiability of the different conditional distributions from the observed data and then show that the latent variables $\mathbf{z}$ are identifiable. Complete proofs can be found in Appendix~\ref{ap:theory}.

\subsection{Identifiability of Nonstationary Hidden States}

We first introduce a lemma from \cite{balsells-rodas2023on}, which established identifiability of Markov switching models, and then we argue that under some mild assumptions on the data generating process, the Markov switching part our our model is also identifiable.

The family of first-order MSMs can be expressed in a finite mixture model form as follows:
\begin{equation}
\label{eq:msm:family}
\small
\mathcal{M}^T := \Bigg\{ \sum_{c_{1:T}} p(c_{1:T}) p_{c_1}(\mathbf{x}_1) 
\prod_{t=2}^T p(\mathbf{x}_t | \mathbf{x}_{t-1}, c_t) \ | \ |\mathcal{C}| <+\infty, \ t \geq 2,
 c_t \in \mathcal{C},\quad\sum_{c_{1:T}} p(c_{1:T}) = 1 \Bigg\}.
\end{equation}
And the identifiability of this model is defined as follows:
\begin{definition}[Definition 3.1 in \cite{balsells-rodas2023on} ]
\label{def:identifiability_cond_transition}
The first-order Markov switching model family \(\mathcal{M}^T\) is said to be identifiable up to permutations, when for $p(\mathbf{x}_1, \mathbf{x}_2,\dots,\mathbf{x}_T) = \hat{p}(\mathbf{x}_1, \mathbf{x}_2,\dots,\mathbf{x}_T), \forall\mathbf{x}_1, \mathbf{x}_2,\dots,\mathbf{x}_T \in \mathbb{R}^{Tm}$, if and only if $|\mathcal{C}| = |\hat{\mathcal{C}}|$ and for each \(c_{1:T}\) there is some \(\hat{c}_{1:T}\) s.t.

\begin{minipage}{0.5\linewidth}
\begin{itemize}
    \item[1.] $p(c_{1:T}) = \hat{p}(\hat{c}_{1:T})$;
    \item[3.] $p(\mathbf{x}_t | \mathbf{x}_{t-1}, c_t) = \hat{p}(\mathbf{x}_t | \mathbf{x}_{t-1}, \hat{c}_t), \forall t\geq 2$, $\mathbf{x}_t, \mathbf{x}_{t-1}\in \mathbb{R}^m$;
\end{itemize}
\end{minipage}
\hfill
\begin{minipage}{0.48\linewidth}
\begin{itemize}
    \item[2.] if $c_{t_1} = c_{t_2}$ for $t_1,t_2\geq 2$ and $t_1 \neq t_2$, then $\hat{c}_{t_1} = \hat{c}_{t_2}$;
    \item[4.] $p_{c_1}(\mathbf{x}_1) = p_{\hat{c}_1}(\mathbf{x}_1)$, $\forall\mathbf{x}_1 \in \mathbb{R}^m$.
\end{itemize}
\end{minipage}
\end{definition}
Then Balsells-Rodas et al.\cite{balsells-rodas2023on} showed if the transition and initial distribution is in Gaussian family with unique indexing, the model is identifiable.
\begin{lemma}\label{lemma: carles} (Theorem 3.1 in \cite{balsells-rodas2023on} )
Define the following first-order Markov switching model family under non-linaer Gaussian families with the following assumptions held:
\begin{enumerate}[label=\roman*]
    \item The transition is in a non-linear Gaussian family with unique indexing:
    \begin{align}\small
    &\mathcal{G}_{\mathcal{C}} = \{ p(\mathbf{x}_t | \mathbf{x}_{t-1}, c) = \mathcal{N}(\bm{m}(\mathbf{x}_{t-1}, c),
    \bm{\Sigma}(\mathbf{x}_{t-1}, c)) \ | \ c \in \mathcal{C}, \mathbf{x}_t, \mathbf{x}_{t-1} \in \mathbb{R}^m \}, \nonumber \\
    &\forall c \neq c' \in \mathcal{C}, \ \exists \mathbf{x}_{t-1} \in \mathbb{R}^m, \ s.t. \ \bm{m}(\mathbf{x}_{t-1}, c)
    \neq \bm{m}(\mathbf{x}_{t-1}, c') \ \text{or} \ \bm{\Sigma}(\mathbf{x}_{t-1}, c) \neq \bm{\Sigma}(\mathbf{x}_{t-1}, c'). \nonumber
    \end{align}
    \item Also for initial distributions:
    \begin{align}
        &\mathcal{I}_{\mathcal{C}} := \{p(\mathbf{x}_1 | c) = \mathcal{N}(\bm{\mu}(c), \bm{\Sigma}_1(c)) \ | \ c \in \mathcal{C} \}, \nonumber \\
        &c \neq c' \in \mathcal{C} \Leftrightarrow \bm{\mu}(c) \neq \bm{\mu}(c') \text{ or } \bm{\Sigma}_1(c) \neq \bm{\Sigma}_1(c'). \nonumber 
    \end{align}
    \item For any $c\in \mathcal{C}$, the mean and covariance, $\bm{m}(\cdot,c):\mathbb{R}^m\rightarrow\mathbb{R}^m$ and $\bm{\Sigma}(\cdot, c):\mathbb{R}^m\rightarrow\mathbb{R}^{m\times m}$, are analytic functions.
\end{enumerate}
where \(\bm{m}(\mathbf{x}_{t-1}, c)\) and \(\bm{\Sigma}(\mathbf{x}_{t-1}, c)\) are non-linear with respect to \(\mathbf{z}_{t-1}\) and denote the mean and covariance matrix of the Gaussian distribution, and \(\mathcal{C}\) is an index set satisfying mild measure-theoretic conditions. Then the Markov switching model is identifiable in terms of Def. \ref{def:identifiability_cond_transition}.
\end{lemma}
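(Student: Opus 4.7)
The plan is to reduce the identifiability claim to the classical identifiability of finite Gaussian mixtures and then bootstrap through the Markov structure to match state sequences. Throughout, I would write $p = \hat{p}$ for the joint $p(\mathbf{x}_1,\dots,\mathbf{x}_T)$ and exploit the mixture representation in \eqref{eq:msm:family} on both sides.

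First, I would handle the initial distribution. Marginalizing the joint over $c_{2:T}$ yields $p(\mathbf{x}_1) = \sum_{c_1} p(c_1)\, p(\mathbf{x}_1 \mid c_1)$, a finite mixture of Gaussians in $\mathbb{R}^m$ with parameters made distinct by assumption (ii). The classical Yakowitz--Spragins identifiability of finite Gaussian mixtures then forces a bijection $\sigma_1$ between the true and estimated supports of $c_1$ that matches both mixing weights and component Gaussians; this directly gives condition 4 of Definition \ref{def:identifiability_cond_transition}. Next, for each fixed $\mathbf{x}_{t-1}$, the conditional $p(\mathbf{x}_t \mid \mathbf{x}_{t-1})$ is again a finite Gaussian mixture in $\mathbf{x}_t$, with weights $p(c_t \mid \mathbf{x}_{t-1})$ and components $\mathcal{N}(\bm{m}(\mathbf{x}_{t-1}, c_t),\bm{\Sigma}(\mathbf{x}_{t-1}, c_t))$, so Yakowitz--Spragins applied pointwise in $\mathbf{x}_{t-1}$ provides a $\mathbf{x}_{t-1}$-indexed bijection of components.

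The central issue is that a priori this pointwise bijection could vary with $\mathbf{x}_{t-1}$. To glue it into a single, time-independent permutation $\sigma : \mathcal{C} \to \hat{\mathcal{C}}$, I would invoke assumption (iii): for any two distinct indices $c \neq c'$, the set on which $\bm{m}(\cdot,c) = \bm{m}(\cdot,c')$ and $\bm{\Sigma}(\cdot,c)=\bm{\Sigma}(\cdot,c')$ simultaneously hold is the zero set of an analytic function, hence either all of $\mathbb{R}^m$ or a proper analytic subset of measure zero. The unique-indexing clause in (i) rules out the former, so on a dense connected set the pointwise bijection is locally constant, and a globally defined $\sigma$ independent of $\mathbf{x}_{t-1}$ arises. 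This yields condition 3 of Definition \ref{def:identifiability_cond_transition}. Condition 2 then follows immediately: if $c_{t_1}=c_{t_2}$, the same global $\sigma$ forces $\hat{c}_{t_1}=\sigma(c_{t_1})=\sigma(c_{t_2})=\hat{c}_{t_2}$.

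For condition 1, identifying the distribution over state sequences, I would match the full joints on both sides after substituting the now-identified emissions and initial distributions. Since the joint $p(\mathbf{x}_{1:T})$ is a finite sum over state sequences $c_{1:T}$ of coefficients $p(c_{1:T})$ times products of analytic Gaussian kernels, and since distinct sequences produce linearly independent product kernels (again by analyticity together with unique indexing applied sequence-wise), equality of the two joints forces $p(c_{1:T})=\hat{p}(\sigma(c_{1:T}))$ for every sequence. The main obstacle I anticipate is precisely the gluing step: converting a family of pointwise-in-$\mathbf{x}_{t-1}$ bijections into a single global permutation. This requires carefully exploiting analyticity of $\bm{m}$ and $\bm{\Sigma}$ to argue that coincidence of two distinct components on any set of positive measure would propagate to all of $\mathbb{R}^m$, contradicting the unique-indexing assumption; making this measure-theoretic/analytic argument precise is where the bulk of the technical work lies.
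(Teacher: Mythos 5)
This statement is not proved in the paper at all: it is imported verbatim as Theorem~3.1 of \cite{balsells-rodas2023on}, so there is no in-paper proof to compare against, only the cited source's argument. Your reconstruction does capture the right ingredients of that argument --- Yakowitz--Spragins identifiability of finite Gaussian mixtures, analyticity forcing coincidence loci of distinct components to have measure zero, and linear independence of product kernels to recover $p(c_{1:T})$ --- so the skeleton is aligned with the actual proof strategy.

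However, the step you yourself flag as the crux --- gluing the $\mathbf{x}_{t-1}$-indexed pointwise bijections into a single global $\sigma$ --- has a genuine gap as sketched. The complement of the zero set of a nontrivial real-analytic function on $\mathbb{R}^m$ is dense and of full measure but is \emph{not} connected in general (e.g.\ the zero set can be a hyperplane), so ``locally constant on a dense connected set'' does not yield a global permutation: the pointwise bijection could a priori use different assignments on different connected components of the complement. Moreover, ruling out such piecewise switching requires using analyticity of the \emph{estimated} model's parameter functions $\hat{\bm m}(\cdot,\hat c),\hat{\bm\Sigma}(\cdot,\hat c)$ as well, not only the measure-zero coincidence of the true components; your sketch invokes analyticity only on the true side. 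The cited proof avoids pointwise gluing altogether by establishing linear independence of the transition kernels viewed as functions of the pair $(\mathbf{x}_{t-1},\mathbf{x}_t)$ (and of their products over $\mathbb{R}^{Tm}$), which is the actual generalization of Yakowitz--Spragins doing the work; your final paragraph asserts exactly this linear independence for condition~1 but without proof, so the hardest part of the theorem remains unestablished in your proposal.
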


% Use $\mu_1, \dots, \mu_C \in \mathbb{R}^{n}$ to denote probability distributions of the $C$ emission distributions $\mu_{c} = p(\mathbf{x}_t \, \vert \, \mathbf{x}_{t-1},c)$. Then the parameters $\mathbf{A}$ and $M = (\mu_1, \dots, \mu_C)$ are identifiable, up to label swapping of the hidden states, that is:

% If $\widetilde{\mathbf{A}}$ is a $C \times  C$ transition matrix, if $\widetilde{\pi}(c)$  is a stationary distribution of $\widetilde{\mathbf{A}}$ with $\widetilde{\pi}(c) > 0$ $\forall c \in \{1, \dots, C \}$, and if  $\tilde{M} = (\tilde{\mu}_1, \dots, \tilde{\mu}_C)$ are $C$ probability distributions on $\mathbb{R}^{n}$ that verify the equality of the distribution functions $\mathbb{P}_{\widetilde{\mathbf{A}}, \tilde{M}} = \mathbb{P}_{\mathbf{A}, M}$, then there exists a permutation $\sigma$ of the set $\{1,\dots, C\}$ such that for all $k, l = 1, \dots, C$ we have $\tilde{A}_{k, l} = A_{\sigma (k), \sigma (l)}$ and $\tilde{\mu}_k = \mu_{\sigma (k)}$.

\begin{definition}[volume-preserving mapping]
\label{def:volume preserving}
A mapping \(f: \mathcal{A} \to \mathcal{B}\) is said to be volume-preserving if
\[
|\mathbf{J}_{f}(a)| = 1,\quad \forall a\in \mathcal{A}.
\]
\end{definition}

\begin{theorem}[identifiability of nonstationary hidden states]\label{thm1:identifiability of nonstationary hidden states}
Suppose the observed data is generated following the nonlinear ICA framework as defined in Eqs. \eqref{Eq:mixing function}, \eqref{Eq:c_t} and \eqref{Eq:transition function of z}. And suppose the following assumptions hold:
\begin{enumerate}[label=\roman*]
    \item The mixing function in Eq. \eqref{Eq:mixing function} and its inverse are analytic and preserve volume (Def. \ref{def:volume preserving}).
    \item The transition in Eq. \eqref{Eq:transition function of z} is in Gaussian family, and the transition functions \(f_i\) are analytic.
\end{enumerate}
Then the Markov switching model is identifiable in terms of Def. \ref{def:identifiability_cond_transition}.
\end{theorem}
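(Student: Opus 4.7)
The plan is to reduce Theorem~\ref{thm1:identifiability of nonstationary hidden states} to Lemma~\ref{lemma: carles} by pushing the observation density through the volume-preserving analytic diffeomorphism $\mathbf{g}^{-1}$, so that the Markov switching structure at the $\mathbf{x}$-level becomes exactly the non-linear Gaussian Markov switching family analyzed by Balsells-Rodas et al. Assumption~(i) is the pivot: because $\mathbf{g}$ is an analytic bijection with $|\mathbf{J}_{\mathbf{g}^{-1}}|\equiv 1$, the change-of-variables formula collapses cleanly to
\[
p(\mathbf{x}_t\mid \mathbf{x}_{t-1},c_t) \;=\; p_{\mathbf{z}}\bigl(\mathbf{g}^{-1}(\mathbf{x}_t)\,\big|\, \mathbf{g}^{-1}(\mathbf{x}_{t-1}),c_t\bigr),
\]
with an analogous identity for the initial density and for the joint density $p(\mathbf{x}_{1:T})$. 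Thus the observed process is exactly the pullback, under a fixed analytic volume-preserving bijection, of a Markov switching model on $\mathbf{z}$.

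Next I would verify the three hypotheses of Lemma~\ref{lemma: carles} in $\mathbf{z}$-coordinates. Assumption~(ii) together with the conditional independence of the components $z_{it}$ given $(\mathbf{z}_{t-1},c_t)$ forces $p_{\mathbf{z}}(\,\cdot\mid \mathbf{z}_{t-1},c)$ to be Gaussian with mean $\bm{m}(\mathbf{z}_{t-1},c)$ and covariance $\bm{\Sigma}(\mathbf{z}_{t-1},c)$ that are analytic in $\mathbf{z}_{t-1}$, since each $f_i$ is analytic and $\epsilon_{it}$ is Gaussian. Unique indexing on transitions is the content of the footnote attached to Eq.~\eqref{Eq:transition function of z}: if two states produced identical conditionals for every $\mathbf{z}_{t-1}$ they could be merged, contradicting $|c_t|=C$; the initial-distribution requirement is handled in exactly the same way. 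With the hypotheses in force I apply Lemma~\ref{lemma: carles} at the $\mathbf{z}$-level, obtaining a state-relabeling $c_{1:T}\mapsto \hat{c}_{1:T}$ that matches prior, transition, and initial distribution per Def.~\ref{def:identifiability_cond_transition}; I then transport this relabeling back to $\mathbf{x}$-coordinates by pushing forward under $\mathbf{g}$ (respectively $\hat{\mathbf{g}}$), which again leaves densities unchanged because of the unit Jacobian.

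The step I expect to require the most care is the final transport. The two candidate models $(f_i,p_{\epsilon_i},\mathbf{A},\mathbf{g})$ and $(\hat f_i,\hat p_{\epsilon_i},\hat{\mathbf{A}},\hat{\mathbf{g}})$ parameterize their latent spaces via different bijections, so the two pulled-back $\mathbf{z}$-side joint densities live in a priori different coordinate systems, and one must argue that Lemma~\ref{lemma: carles}'s conclusion is invariant under this reparameterization so that conditions 1--4 of Def.~\ref{def:identifiability_cond_transition} survive at the $\mathbf{x}$-level. The volume-preserving assumption is exactly what prevents a non-constant Jacobian from distorting the Gaussian form of the emission or from collapsing two distinct latent Gaussians onto a single $\mathbf{x}$-level conditional, and the analyticity of $\mathbf{g}^{-1}$ is what preserves analyticity of $\bm{m}$ and $\bm{\Sigma}$ after the change of variables; together they ensure that the unique-indexing condition propagates from $\mathbf{z}$-space to $\mathbf{x}$-space. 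If this propagation cannot be done verbatim, the fallback is to conduct the identifiability argument entirely in $\mathbf{z}$-coordinates and then translate the conclusion back along the respective $\mathbf{g}$'s, using only bijectivity and the unit Jacobian, which is sufficient to recover Def.~\ref{def:identifiability_cond_transition} as stated.
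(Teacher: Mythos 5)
Your reduction to Lemma~\ref{lemma: carles} via the unit-Jacobian change of variables, the unique-indexing argument drawn from the footnote to Eq.~\eqref{Eq:transition function of z}, and analyticity by composition are exactly the ingredients of the paper's proof. The difference --- and it is a genuine gap, not merely stylistic --- is the coordinate system in which you invoke the lemma. You propose to pull the observed densities back to $\mathbf{z}$-space, apply Lemma~\ref{lemma: carles} there, and then ``transport the relabeling back,'' and you yourself flag that the two candidate models $(f_i,p_{\epsilon_i},\mathbf{A},\mathbf{g})$ and $(\hat f_i,\hat p_{\epsilon_i},\hat{\mathbf{A}},\hat{\mathbf{g}})$ induce two \emph{different} latent coordinate systems. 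That concern is fatal to the $\mathbf{z}$-level application as stated: Lemma~\ref{lemma: carles} is an identifiability statement whose hypothesis is equality of a single observed joint distribution, and equality of $p(\mathbf{x}_{1:T})$ and $\hat p(\mathbf{x}_{1:T})$ provides no common $\mathbf{z}$-space in which the two pulled-back families can be compared --- the two latent descriptions are related by the unknown map $\hat{\mathbf{g}}^{-1}\circ\mathbf{g}$, which is precisely what one is trying to pin down. Your fallback of ``conducting the identifiability argument entirely in $\mathbf{z}$-coordinates and then translating back'' runs into the same obstruction.

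The paper avoids this by running the reduction in the opposite direction: it verifies the hypotheses of Lemma~\ref{lemma: carles} directly for the observed conditionals $p(\mathbf{x}_t\mid\mathbf{x}_{t-1},c)$ --- the Gaussian form via $|\mathbf{J}_{\mathbf{g}}|=1$, unique indexing via invertibility of $\mathbf{g}$ together with the no-merged-states footnote, and analyticity of the mean via the explicit composition $\bm{m}(\mathbf{x}_{t-1},c)=\mathbf{g}\circ\mathbf{f}(\mathbf{g}^{-1}(\mathbf{x}_{t-1}),c)$ (similarly for the covariance and for the initial distributions) --- and then applies the lemma once, at the observation level, where its conclusion already coincides with Def.~\ref{def:identifiability_cond_transition}. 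No transport step is needed. Reversing the direction of your argument in this way closes the gap; the remaining ingredients of your proposal survive unchanged.
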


Furthermore, Corollary 3.1 in \cite{balsells-rodas2023on} further shows that the parameters of the Markov chain \(p(c_{t+1} \mid c_t)\) and \(p(c_1)\) are identifiable up to permutations, providing a mapping \(\hat{c} = \sigma(c)\).

For notational simplicity and without loss of generality, we can assume that the states are ordered such that $c = \sigma(c)$. 
This provides us with a bridge to further leverage the temporal independence condition in the latent space to establish the identifiability result for the demixing function or, in other words, the latent variables $\mathbf{z}_t$.

% For notational simplicity, and without loss of generality, we can assume the components are ordered such that $c = \sigma(c)$. 
% That leads us to the identifiability of the nonstationarity in the system i.e. up to label swapping of the hidden states, the conditional emission distributions $p(\mathbf{x}_t\vert \mathbf{x}_{t-1},c_{t})$ and transition matrix $\mathbf{A}$ are identifiable, hence providing us a bridge to further leverage the temporal independence condition in the latent space to establish the identifiability result for demixing function or in other words the latent variables $\mathbf{z}_t$.

\subsection{Identifiability of Latent Causal Processes}

To incorporate non-linear ICA into the Markov switching model, we define the emission distribution $p(\mathbf{x}_{t} \, \vert \, \mathbf{x}_{t-1},c)$ as a deep latent variable model. First, the latent independent component variables $\mathbf{z}_t \in \mathbb{R}^{n}$ are generated from a factorial prior, given the hidden state $c_t$ and the previous $\mathbf{z}_{t-1}$, as
\begin{equation}
    p(\mathbf{z}_t \, \vert \, \mathbf{z}_{t-1}, c_{t}) = \prod_{k=1}^{n} p(z_{kt} \, \vert \, \mathbf{z}_{t-1},c_{t}).
\end{equation}
Second, the observed $\mathbf{x}_t$ is generated by a non-linear mixing function as in Eq.~\eqref{Eq:mixing function} which is assumed to be bijective with inverse given by $\mathbf{z}_t = \mathbf{g}(\mathbf{x}_t)$. Let $\eta_{kt}(c_t) \triangleq \log p(z_{kt} | \mathbf{z}_{t-1}, c_t)$, and assume that $\eta_{kt}(c_t)$ is twice differentiable in $z_{kt}$ and is differentiable in $z_{l,t-1}$, $l=1,2,...,n$. Note that the parents of $z_{kt}$ may be only $c_t$ and a subset of $\mathbf{z}_{t-1}$; if $z_{l,t-1}$ is not a parent of $z_{kt}$, then $\frac{\partial \eta_k}{\partial z_{l,t-1}} = 0$.

\begin{theorem}\label{thm2: identifiability of z}
(identifiability of the independent components) Suppose there exists an invertible function $\hat{\mathbf{g}}^{-1}$, which is the estimated demixing function that maps $\mathbf{x}_t$ to $\hat{\mathbf{z}}_t$, i.e., 
\begin{equation} \label{Eq:est demixing function}
    \hat{\mathbf{z}}_t = \hat{\mathbf{g}}^{-1}(\mathbf{x}_t)
\end{equation}
such that the components of $\hat{\mathbf{z}}_t$ are mutually  independent conditional on $\hat{\mathbf{z}}_{t-1}$. 
Let 
\begin{equation} \label{Eq:v1}
\begin{split}
    \mathbf{v}_{k,t}(c) &\triangleq \Big(\frac{\partial^2 \eta_{kt}(c)}{\partial z_{k,t} \partial z_{1,t-1}}, \frac{\partial^2 \eta_{kt}(c)}{\partial z_{k,t} \partial z_{2,t-1}}, ..., \frac{\partial^2 \eta_{kt}(c)}{\partial z_{k,t} \partial z_{n,t-1}} \Big)^\intercal,\\ 
    \mathring{\mathbf{v}}_{k,t}(c) &\triangleq \Big(\frac{\partial^3 \eta_{kt}(c)}{\partial z_{k,t}^2 \partial z_{1,t-1}}, \frac{\partial^3 \eta_{kt}(c)}{\partial z_{k,t}^2 \partial z_{2,t-1}}, ..., \frac{\partial^3 \eta_{kt}(c)}{\partial z_{k,t}^2 \partial z_{n,t-1}} \Big)^\intercal. 
\end{split}
\end{equation}
And 
\begin{equation} \label{Eq:v2}
\begin{split}
\mathbf{s}_{kt} &\triangleq \Big( \mathbf{v}_{kt}(1)^\intercal, ..., 
\mathbf{v}_{kt}(C)^\intercal, 
 \frac{\partial^2 \eta_{kt}(2)}{\partial z_{kt}^2 } - 
 \frac{\partial^2 \eta_{kt}(1)}{\partial z_{kt}^2 }, ...,
 \frac{\partial^2 \eta_{kt}(C)}{\partial z_{kt}^2 } - 
 \frac{\partial^2 \eta_{kt}(C-1)}{\partial z_{kt}^2 }
 \Big)^\intercal,\\
\mathring{\mathbf{s}}_{kt} &\triangleq \Big( \mathring{\mathbf{v}}_{kt}(1)^\intercal, ..., 
\mathring{\mathbf{v}}_{kt}(C)^\intercal, 
\frac{\partial \eta_{kt}(2)}{\partial z_{kt} } - 
 \frac{\partial \eta_{kt}(1)}{\partial z_{kt} }, ...,
 \frac{\partial \eta_{kt}(C)}{\partial z_{kt} } - 
 \frac{\partial \eta_{kt}(C-1)}{\partial z_{kt} }
 \Big)^\intercal.
\end{split}
\end{equation}
If for each value of $\mathbf{z}_t$, $\mathbf{s}_{1t}, \mathring{\mathbf{s}}_{1t}, \mathbf{v}_{2t}, \mathring{\mathbf{s}}_{2t}, ..., \mathbf{s}_{nt}, \mathring{\mathbf{s}}_{nt}$, as
$2n$ function vectors $\mathbf{s}_{k,t}$ and $\mathring{\mathbf{s}}_{k,t}$, with $k=1,2,...,n$, are linearly independent, then $\hat{\mathbf{z}}_{t}$ must be an invertible, component-wise transformation of a permuted version of $\mathbf{z}_t$.
\end{theorem}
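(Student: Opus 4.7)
The plan is to follow the standard non-stationary nonlinear-ICA identifiability template, adapted to the unobserved Markov-switching setting. First, by Thm.~\ref{thm1:identifiability of nonstationary hidden states} and Corollary~3.1 of~\cite{balsells-rodas2023on}, the Markov chain and emission distributions are identified up to a relabeling of states, so without loss of generality $\hat c_t=c_t$. Let $\mathbf{h}:=\hat{\mathbf{g}}^{-1}\circ\mathbf{g}$, which is bijective, and set $\mathbf{H}_t:=\mathbf{J}_{\mathbf{h}}(\mathbf{z}_t)$. The change-of-variables formula $\hat p(\hat{\mathbf{z}}_t\mid\hat{\mathbf{z}}_{t-1},c)=p(\mathbf{z}_t\mid\mathbf{z}_{t-1},c)/|\det\mathbf{H}_t|$, combined with the two componentwise factorizations, yields the master identity
\[
\sum_{k=1}^{n}\eta_{kt}(c)=\sum_{i=1}^{n}\hat\eta_{it}(c)+\log|\det\mathbf{H}_t|,\qquad \forall\,c,\mathbf{z}_t,\mathbf{z}_{t-1},
\]
on which all subsequent work rests. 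The target is to show that $\mathbf{H}_t$ is a generalized permutation matrix at every $\mathbf{z}_t$; integrating this structural constraint gives $\hat{\mathbf{g}}^{-1}=T\circ\pi\circ\mathbf{g}^{-1}$ as required.

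Next I would fix $j\neq k$ and produce two flavours of differential constraints. Applying $\partial^{2}/\partial z_{j,t}\partial z_{k,t}$ to the master identity annihilates the LHS (each $\eta_{mt}$ depends on $\mathbf{z}_t$ only through $z_{mt}$), producing the \emph{off-diagonal identity}
\[
\sum_{i}\Bigl[\tfrac{\partial^{2}\hat\eta_{it}(c)}{\partial\hat z_{it}^{2}}H_{t,ij}H_{t,ik}+\tfrac{\partial\hat\eta_{it}(c)}{\partial\hat z_{it}}\tfrac{\partial^{2} h_{i}}{\partial z_{j,t}\partial z_{k,t}}\Bigr]+\tfrac{\partial^{2}\log|\det\mathbf{H}_t|}{\partial z_{j,t}\partial z_{k,t}}=0.
\]
Applying $\partial^{2}/\partial z_{k,t}\partial z_{l,t-1}$ to the master identity instead kills the Jacobian term (which depends on $\mathbf{z}_t$ only) and leaves only the $k$-th summand on the LHS, giving the \emph{``$\mathbf{v}$''-identity} expressing each $\mathbf{v}_{kt}(c)_l$ as $\sum_{i,m}H_{t,ik}H_{t-1,ml}\,\partial^{2}\hat\eta_{it}(c)/\partial\hat z_{it}\partial\hat z_{m,t-1}$; a parallel computation with $\partial^{3}/\partial z_{k,t}^{2}\partial z_{l,t-1}$ produces the corresponding \emph{``$\mathring{\mathbf{v}}$''-identity}.

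I would then assemble these into a homogeneous linear system in the $2n$ unknowns $\{H_{t,ij}H_{t,ik}\}_{i=1}^{n}\cup\{\partial^{2}h_i/\partial z_{j,t}\partial z_{k,t}\}_{i=1}^{n}$ as follows. (a) Differentiating the off-diagonal identity once more in $z_{l,t-1}$ kills the Jacobian's second-order term and produces, for each $(c,l)$, an equation whose $\hat\eta$-coefficients can be rewritten via the $\mathbf{v}$- and $\mathring{\mathbf{v}}$-identities (using invertibility of $\mathbf{H}_t,\mathbf{H}_{t-1}$) into pure $\mathbf{v}_{kt}(c)_l,\mathring{\mathbf{v}}_{kt}(c)_l$ data — delivering the $nC$ cross-partial slots of $\mathbf{s}_{kt}$ and $\mathring{\mathbf{s}}_{kt}$. (b) Subtracting the off-diagonal identity, and its $\partial/\partial z_{k,t}$ version, at $c$ from that at $c-1$ cancels the $c$-independent Jacobian nuisance; after rewriting the residual $\hat\eta$-differences through the $j=k$ specialization of the master identity (and its first derivative in $z_{k,t}$), one recovers exactly the $C-1$ difference entries $\partial^{2}\eta_{kt}(c)/\partial z_{kt}^{2}-\partial^{2}\eta_{kt}(c-1)/\partial z_{kt}^{2}$ and $\partial\eta_{kt}(c)/\partial z_{kt}-\partial\eta_{kt}(c-1)/\partial z_{kt}$ that appear in Eq.~\eqref{Eq:v2}. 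By construction the coefficient vectors against the $2n$ unknowns are then precisely $\{\mathbf{s}_{kt},\mathring{\mathbf{s}}_{kt}\}_{k=1}^{n}$.

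Finally, the assumed linear independence of these $2n$ vectors at every $\mathbf{z}_t$ forces the trivial solution: $H_{t,ij}H_{t,ik}=0$ for every $i$ and every $j\neq k$, so each row of $\mathbf{H}_t$ has at most one nonzero entry. Combined with bijectivity of $\mathbf{h}$, $\mathbf{H}_t$ is a generalized permutation matrix at every $\mathbf{z}_t$, and integration gives $\mathbf{h}=T\circ\pi$ with $\pi$ a permutation and $T$ component-wise invertible, which is exactly the claimed identifiability. I expect the main obstacle to be the bookkeeping in the third paragraph: one must carefully check that, after the $c$-subtractions and the $\mathbf{H}_t^{-1},\mathbf{H}_{t-1}^{-1}$ rewritings, each of the $nC+C-1$ coefficient slots lines up column-by-column with the entries of $\mathbf{s}_{kt}$ (resp.\ $\mathring{\mathbf{s}}_{kt}$) as listed in Eq.~\eqref{Eq:v2}, and that the rewriting is valid on an open set so that the Jacobian inverses used to rewrite $\hat\eta$-derivatives into true $\eta$-quantities are well defined.
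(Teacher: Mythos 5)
Your overall template is the right one and matches the paper's: reduce to a diffeomorphism $\mathbf{h}$ between the true and estimated latents, exploit the vanishing of cross second-order derivatives of the factorized conditional log-density, differentiate once more in $z_{l,t-1}$ and take differences across $c$ to kill the Jacobian nuisance terms, and read off a homogeneous linear system in the $2n$ unknowns $\{\mathbf{H}_{kit}\mathbf{H}_{kjt}\}_{k}\cup\{\partial \mathbf{H}_{kit}/\partial \hat z_{jt}\}_{k}$ whose coefficient vectors are $\mathbf{s}_{kt},\mathring{\mathbf{s}}_{kt}$. Invoking Theorem~\ref{thm1:identifiability of nonstationary hidden states} to fix $\hat c_t = c_t$ up to relabeling is also how the paper bridges the two results.

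However, you have oriented the diffeomorphism the wrong way, and this is a genuine gap rather than bookkeeping. Because you set $\mathbf{h}=\hat{\mathbf{g}}^{-1}\circ\mathbf{g}$ and differentiate with respect to the \emph{true} coordinates $z_{jt},z_{kt}$, it is $\sum_m \eta_{mt}(c)$ that gets annihilated, and the surviving coefficients in your off-diagonal identity are derivatives of the \emph{estimated} densities $\hat\eta_{it}$ --- whereas the linear-independence hypothesis is stated for the true $\eta_{kt}$. Your proposed repair (rewriting $\hat\eta$-derivatives as $\eta$-derivatives through $\mathbf{H}_t^{-1}$ and $\mathbf{H}_{t-1}^{-1}$) is precisely the unresolved crux: the rewriting multiplies your unknowns $H_{t,ij}H_{t,ik}$ and $\partial^2 h_i/\partial z_{jt}\partial z_{kt}$ by inverse-Jacobian entries and recombines them, so it is not established that you are left with a linear system in a fixed set of $2n$ unknowns whose coefficient vectors equal $\mathbf{s}_{kt},\mathring{\mathbf{s}}_{kt}$; for the $c$-differenced block the conversion additionally entangles the first- and second-derivative differences through the matrix with entries $H_{t,ik}^2$. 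The paper avoids all of this by taking $\mathbf{h}=\mathbf{g}^{-1}\circ\hat{\mathbf{g}}$, so that $\log p(\hat{\mathbf{z}}_t\mid\hat{\mathbf{z}}_{t-1},c_t)=\sum_k \eta_{kt}(c_t)+\log|\mathbf{H}_t|$ and all derivatives are taken in $\hat z_{it},\hat z_{jt},z_{l,t-1}$: the left-hand side vanishes by the assumed conditional independence of the components of $\hat{\mathbf{z}}_t$, and the right-hand side is already linear in $\mathbf{H}_{kit}\mathbf{H}_{kjt}$ and $\partial\mathbf{H}_{kit}/\partial\hat z_{jt}$ with coefficients drawn directly from the true $\eta$'s, so no inversion is ever needed. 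Flipping the direction of $\mathbf{h}$ makes your third paragraph unnecessary and closes the gap.
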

So far, the identifiability result has been established without observing the nonstationarity indicators such as domain indices. In the next section, a novel variable auto-encoder-based method is introduced to estimate the demixing function $\hat{\mathbf{g}}^{-1}$.

\section{\OUR: Nonstationary Causal Temporal Representation Learning}

In this section, we present the details of \OUR~to estimate the latent causal processes under unobserved nonstationary distribution shift, given the identifiability results in Sec.~\ref{sec:theory}. First, we show that our framework includes three modules, Autoregressive Hidden Markov Module, Prior Network, and Encoder-Decoder Module. Then, we provide the optimization objective of our model training including an HMM free-energy lower bound, a reconstruction likelihood loss, and a KL divergence.
\subsection{Model Architecture}
\begin{wrapfigure}{r}{0.5\textwidth}
    \vspace{-2em}
    \centering
    \includegraphics[width=0.48\textwidth]{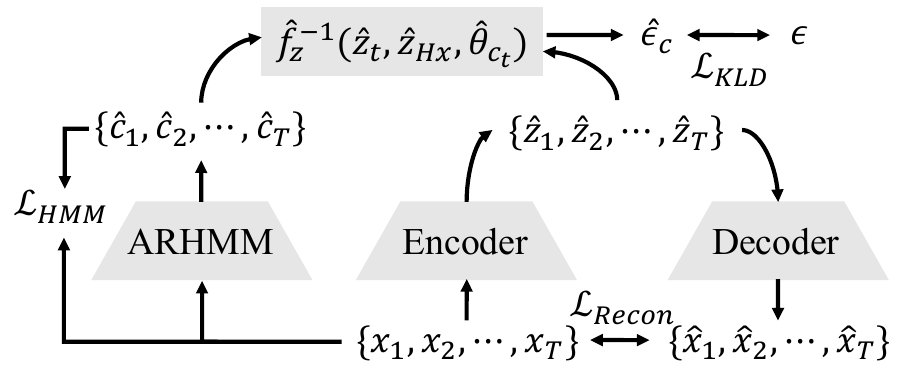}
    \caption{Illustration of \OUR with (1) Autoregressive Hidden Markov Module, (2) Prior Network, and (3) Encoder-Decoder Module.}
    \label{fig:arch}
    \vspace{-1.5em}
\end{wrapfigure}
Our framework extends Sequential Variational Auto-Encoders \cite{yingzhen2018disentangled} with tailored modules to model nonstationarity, and enforces the conditions in Sec.~\ref{sec:theory} as constraints.
We give the estimation procedure of the latent causal dynamics model in Eq.~\eqref{Eq:transition function of z}.
The model architecture is shown in Fig.~\ref{fig:arch}.
The framework has three major components (1) Autoregressive Hidden Markov Module~(ARHMM), (2) Prior Network Module, and (3) Encoder-Decoder Module.
\paragraph{Autoregressive Hidden Markov Module~(ARHMM)}
The first component of our framework is ARHMM which deals with the nonstationarity with unobserved domains. As discussed in Thm~\ref{thm1:identifiability of nonstationary hidden states}, the transition function or conditional emission distributions across different domains together with the Markov transition matrix $\mathbf{A}$ are identifiable.
This module estimates the transition function of different domains $p(\mathbf{x}_t\vert\mathbf{x}_{t-1},c_t)$ and the transition matrix $\mathbf{A}$ of the Markov process, and ultimately decodes the optimal domain indices $\{\hat{c}_1, \hat{c}_2,\dots,\hat{c}_T\}$ via the Viterbi algorithm.
\paragraph{Prior Network Module}
To better estimate the prior distribution $p(\hat{z}_{t}\vert \hat{\mathbf{z}}_{\text{Hx}}, c_t)$, let $\mathbf{z}_{\text{Hx}}$ denote the lagged latent variables up to maximum time lag $L$. We evaluate $p(\hat{z}_{t}\vert \hat{\mathbf{z}}_{\text{Hx}}, c_t) = p_{\epsilon}\left(\hat{f}_{z}^{-1}(\hat{z}_{t}, \hat{\mathbf{z}}_{\text{Hx}}, \hat{\boldsymbol{\theta}}_{c_t})\right)\Big|\frac{\partial \hat{f}_z^{-1}}{\partial \hat{z}_{t}}\Big|$ by learning a holistic inverse dynamics $\hat{f}_z^{-1}$ that takes the estimated change factors for dynamics $\hat{\boldsymbol{\theta}}_{c_t}$ as inputs. The conditional independence of the estimated latent variables $p(\hat{\mathbf{z}}_t | \hat{\mathbf{z}}_{\text{Hx}})$ is enforced by summing up all estimated component densities when obtaining the joint $p(\mathbf{z}_t \vert \mathbf{z}_{\text{Hx}}, c_t)$ in Eq.~\ref{eq:np-trans}. Given that the Jacobian is lower-triangular, we can compute its determinant as the product of diagonal terms. The detailed derivations are given in Appendix~\ref{ap:derive}.
\vspace{0.25cm}
\begin{equation}\label{eq:np-trans}
\setlength{\abovedisplayskip}{1pt}
\setlength{\belowdisplayskip}{1pt}
\log p\left(\hat{\mathbf{z}}_t \vert \hat{\mathbf{z}}_{\text{Hx}}, c_t\right) = \underbrace{\sum_{i=1}^n \log p(\hat{\epsilon_i} \vert c_t)}_{\text{Conditional indepdence}}+ \underbrace{\sum_{i=1}^n \log \Big| \frac{\partial \hat{f}^{-1}_i}{\partial \hat{z}_{it}}\Big|}_{\text{Lower-triangular Jacobian}}  
\end{equation}
\paragraph{Encoder-Decoder Module}
The third component is a Variational Auto-Encoder based module which utilizes reconstruction loss to enforce the invertibility of learned mixing function $\hat{\mathbf{g}}$. Specifically, the encoder fits the demixing function $\hat{\mathbf{g}}^{-1}$ and the decoder fits the mixing function $\hat{\mathbf{g}}$.
The details of the implementation are in Appendix~\ref{ap:implementation}.
\subsection{Optimization}
The first training objective of \OUR~ is to maximize the Log-likelihood of the observed data:
\begin{equation}\label{eq:LL}
    \log p_{\boldsymbol{\theta_{\text{HMM}}}}(\{\mathbf{x}_1,\mathbf{x}_2,\dots,\mathbf{x}_T\}) 
\end{equation}
where $\boldsymbol{\theta_{\text{HMM}}}$ represents the HMM training parameters. Then the free energy lower bound can be defined as:
\begin{equation}\label{Eq: HMM LL lower bound}
\begin{split}
    -\mathcal{L}_{\text{HMM}} =  \mathcal{L}(q(\mathbf{c}), \boldsymbol{\theta_{\text{HMM}}}) \triangleq \mathbb{E}_{q(\mathbf{c})} \left[\log p_{\boldsymbol{\theta_{\text{HMM}}}}(\mathbf{x}_1, \mathbf{x}_2, \dots, \mathbf{x}_T, \mathbf{c})\right] - \mathbf{H}(q(\mathbf{c}))
\end{split}
\end{equation}
Consistent with the theory part, the first training objective is to maximize data log-likelihood in the ARHMM module to get optimal $q(\mathbf{c}^{\star})$.
\begin{equation}\label{Eq: HMM q star}
    q(\mathbf{c}^{\star}) \triangleq \argmax_{q(\mathbf{c})} \mathcal{L}(q(\mathbf{c}), \boldsymbol{\theta_{\text{HMM}}})
\end{equation}
which can easily be computed by the Forward-Backward algorithm and luckily all of it is differentiable to the HMM training parameters $\boldsymbol{\theta}_{\text{HMM}}$(transition matrix $\mathbf{A}$ and transition function parameters $\boldsymbol{\theta}_{f}$). 

Then the second part is to maximize the Evidence Lower BOund (\textsc{ELBO}) for the VAE framework, which can be written as~(complete derivation steps are in Appendix~\ref{ap:ELBO}):

\begin{equation}
\small
\begin{split}
\textsc{ELBO}\triangleq &\log p_{\text{data}}(\mathbf{X}) - D_{KL}(q_{\phi}(\mathbf{Z}\vert \mathbf{X})\vert\vert p_{\text{data}}(\mathbf{Z}\vert \mathbf{X}))\\
=&\underbrace{\mathbb{E}_{\mathbf{z}_t}\sum_{t=1}^{T} \log p_{\text{data}}(\mathbf{x}_t\vert\mathbf{z}_t)}_{-\mathcal{L}_{\text{Recon}}} 
+ \underbrace{\mathbb{E}_{\mathbf{c}}\left[ \sum_{t=1}^{T}\log  p_{\text{data}}(\mathbf{z}_t\vert \mathbf{z}_{\text{Hx}},c_t) 
- \sum_{t=1}^{T}\log q_{\phi}(\mathbf{z}_t\vert\mathbf{x}_t)\right]}_{-\mathcal{L}_{\text{KLD}}}
\end{split}
\end{equation}

We use mean-squared error (MSE) for the reconstruction likelihood loss $\mathcal{L}_{\text{Recon}}$.
The KL divergence  $\mathcal{L}_{\text{KLD}}$ is estimated via a sampling approach since with a learned nonparametric transition prior, the distribution does not have an explicit form.
Specifically, we obtain the log-likelihood of the posterior, evaluate the prior $\log p\left(\hat{\mathbf{z}}_t \vert \hat{\mathbf{z}}_{\text{Hx}}, c_t\right)$ in Eq.~\eqref{eq:np-trans}, and compute their mean difference in the dataset as the KL loss: $\mathcal{L}_{\text{KLD}} = \mathbb{E}_{\mathbf{\hat z}_t \sim q\left(\mathbf{\hat z}_t \vert \mathbf{x}_t\right)} \log q(\mathbf{\hat z}_t|\mathbf{x}_t) - \log p\left(\hat{\mathbf{z}}_t \vert \hat{\mathbf{z}}_{\text{Hx}}, c_t\right)$.

\section{Experiments}
We evaluate the identifiability results of \OUR~on simulated and real-world temporal datasets. We first introduce the evaluation metrics and baselines and then discuss the datasets we used in our experiments. Lastly, we show the experiment results discuss the performance, and make comparisons.
\subsection{Evaluation Metrics}
\textbf{Mean Correlation Coefficient (MCC)}
To evaluate the identifiability of the latent variables, we compute the Mean Correlation Coefficient (MCC) on the test dataset. MCC is a standard metric in the ICA literature for continuous variables which measure the identifiability of the learned latent causal processes. MCC is close to 1.0 when latent variables are  identifiable up to permutation and component-wise invertible transformation in the noiseless case.

\textbf{Mean Square Error (MSE) for estimating $\mathbf{A}$}
As introduced in the theory, the $\mathbf{A}$ is identifiable in our setting, which means that our proposed method can provide accurate estimation for the transition matrix $\mathbf{A}$, to valid such a claim, we use mean square error (MSE) to capture the distance between the estimated $\hat{\mathbf{A}}$ and ground truth $\mathbf{A}$.

\textbf{Accuracy for estimating $c_t$}
We also test the accuracy for estimating the discrete domain indices $c_t$ supplementary to the MSE for $\mathbf{A}$ since in theory, the $\mathbf{A}$ is identifiable but the $c_t$ is generally not identifiable, which is relatively easy to understand as an analogy in Hidden Markov Models, the transition matrix is identifiable but we can only ``infer'' the best possible discrete variables but cannot establish identifiability for it.

It is also worth mentioning that the MSE and Accuracy are influenced by the permutation, which is also true in clustering evaluation problems. Here we explored all permutations and selected the best possible assignment for evaluation.

\subsection{Baselines}
The following identifiable nonlinear ICA methods are used: (1) BetaVAE \cite{higgins2016beta} which ignores both history and nonstationarity information. (2) i-VAE \cite{khemakhem2020variational} and TCL \cite{hyvarinen2016unsupervised} which leverage nonstationarity to establish identifiability but assume independent factors. (3) SlowVAE \cite{klindt2020towards}, and PCL \cite{hyvarinen2017nonlinear} which exploit temporal constraints but assume independent sources and stationary processes. (4) TDRL \cite{yao2022temporally} which assumes nonstationary, causal processes but with observed domain indices. (5) HMNLICA~\cite{halva2020hidden} which considers the unobserved nonstationary part in the data generation process but doesn't allow any causally related time-delayed relations.

\subsection{Simulated Results}

We generate two synthetic datasets corresponding to different complexity of the nonlinear mixing function $\mathbf{g}$. Both synthetic datasets satisfy our identifiability conditions in the theorems following the procedures in Appendix~\ref{ap:synthetic}. As in Table~\ref{tab:syn-results}, \OUR~can recover the latent processes under unknown nonstationary distribution shifts with high MCCs (>0.95). The baselines that do not exploit history (i.e., BetaVAE, i-VAE, TCL), with independent source assumptions (SlowVAE, PCL), consider limited nonstationary cases (TDRL) distort the identifiability results. The only baseline that considers the unknown nonstationarity in the domain indices (HMNLICA) explored the Markov Assumption but doesn't allow a time-delayed causal process and hence suffers a poor result (MCC 0.58).
% \begin{table*}[h]

\begin{wraptable}{r}{0.6\textwidth}
\vspace{-1em}
\caption{Experiment results of two synthetic datasets on baselines and proposed \OUR, we run the experiments with five different random seeds and calculate the average with standard derivation. The best results are shown in \textbf{bold}.}
\label{tab:syn-results}
\begin{center}
\begin{tabular}{l|c|c|c}
\toprule
\multirow{2}*{\bf Method} & \multicolumn{3}{c}{\bf Mean Correlation Coefficien (MCC)}\\
\cmidrule{2-4}
 ~ & {\bf Dataset A} & {\bf Dataset B} & {\bf Ave.}
\\
\midrule 
{BetaVAE} & {44.02 $\pm$ 3.11} & { 47.48 $\pm$ 10.58} & { 45.75 }
\\ 
{i-VAE} & {89.74 $\pm$ 3.38} & {44.50 $\pm$  0.25} & { 67.12 }
\\
{TCL} & {37.12 $\pm$ 0.60} & {56.33 $\pm$  3.77} & { 46.73 }
\\
{SlowVAE} & {33.84 $\pm$ 0.60} & {53.92 $\pm$  3.56} & { 43.88 }
\\
{PCL} & {42.41 $\pm$ 2.87} & {63.66 $\pm$  2.77} & { 53.04 }
\\
{HMNLICA} & {59.82 $\pm$ 4.94}
 & {57.25 $\pm$  1.45} & { 58.54 }
\\
{TDRL} & {83.99 $\pm$ 1.92} & {72.02 $\pm$  2.76} & { 78.01 }
\\
\midrule
{\OUR} & {\bf 98.85 $\pm$ 0.30} & {\bf 99.01 $\pm$  0.24} & {\bf 98.93 }
\\
\bottomrule
\end{tabular}
\end{center}
\vspace{-1em}
\end{wraptable}

On the other hand, the difference between dataset A and dataset B is the nonlinearity in the mixing function, dataset A has a relatively simple nonlinear mixing function, on the contrary, dataset B has more complex nonlinearity. Some variability has been observed among the relative performance ranks of different baselines. For example, i-VAE showed a great discrepancy between the two datasets, which revived the weakness of capturing complex nonlinearity in the unknown nonstationary distribution shift environments. Again we also observed that our proposed method can constantly recover the latent independent components with high MCC which indicates on both datasets the model is identifiable and the estimation algorithm is highly effective.
\begin{table*}[h]
% \vspace{-3mm}
\caption{Supplementary experiment results of two synthetic datasets on estimating domain indices $c_t$ and transition matrix $\mathbf{A}$ in \OUR, we run the experiments with five different random seeds and calculate the average with standard derivation.}
\label{tab:syn-results-supp}
\centering
\begin{tabular}{c|c|c}
\toprule
\multicolumn{3}{c}{\bf Unknown Nonstationary Metrics}\\
\midrule
{\bf Dataset} & {\bf Accuracy estimating $c_t$} & {\bf MSE estimating $\mathbf{A}$} \\
\midrule
A & {89.96 $\pm$ 0.24} & 1.01 $\times 10^{-3}$ $\pm$ 1.67 $\times 10^{-4}$\\
B & {89.84 $\pm$ 0.29} & 1.08 $\times 10^{-3}$ $\pm$ 1.89 $\times 10^{-4}$\\
\bottomrule
\end{tabular}
\end{table*}
To further validate if \OUR~successfully recovered the Markov transition matrix $\mathbf{A}$ and inferred the domain indices $c_t$ with high accuracy. We further examine the accuracy for estimating nonstationary domain indices $c_t$ and the mean square error estimating the transition matrix $\mathbf{A}$. As shown in Table~\ref{tab:syn-results-supp} the result is consistent with our theory in which the transition matrix $\mathbf{A}$ is identifiable and we can estimate it with high accuracy. For the nonstationary domain indices $c_t$ even though there is no identifiability result governing the estimation accuracy, it can still be inferred pretty well since it is nothing but a decoding problem in Hidden Markov Models.

\subsection{Real-world Applications}
\paragraph{Video data -- Modified CartPole Environment}

We evaluate \OUR~on the modified CartPole \cite{huang2021adarl} video dataset and compare the performances with the baselines. Modified Cartpole is a nonlinear dynamical system with cart positions $x_t$ and pole angles $\theta_t$ as the true state variables. The dataset descriptions are in Appendix \ref{ap:cartpole-dataset}. Similar to the synthetic dataset, we randomly initialize a Markov chain and roll out a series of $c_t$, and configure the CartPole environment with respect to the $c_t$. Specifically, we use five domains with different configurations of cart mass, pole mass, gravity, and noise levels. Together with the two discrete actions (i.e., left and right). By doing so, the nonstationarity is enforced, and since we can control and access all intermediate states in the system, all metrics including MCC and $c_t$ accuracy together with $\mathbf{A}$ MSE can be easily calculated. We fit \OUR~with two-dimensional causal factors. We set the latent size $n=2$ and the lag number $L=2$. In Fig.~\ref{fig:cartpole}, the latent causal processes are recovered, as seen from (a) high MCC for the latent causal processes; (b) the latent factors are estimated up to component-wise transformation; and (c) the latent traversals confirm the two recovered latent variables correspond to the position and pole angle. 

\begin{figure}[ht]
    \centering
    \includegraphics[width=0.8\linewidth]{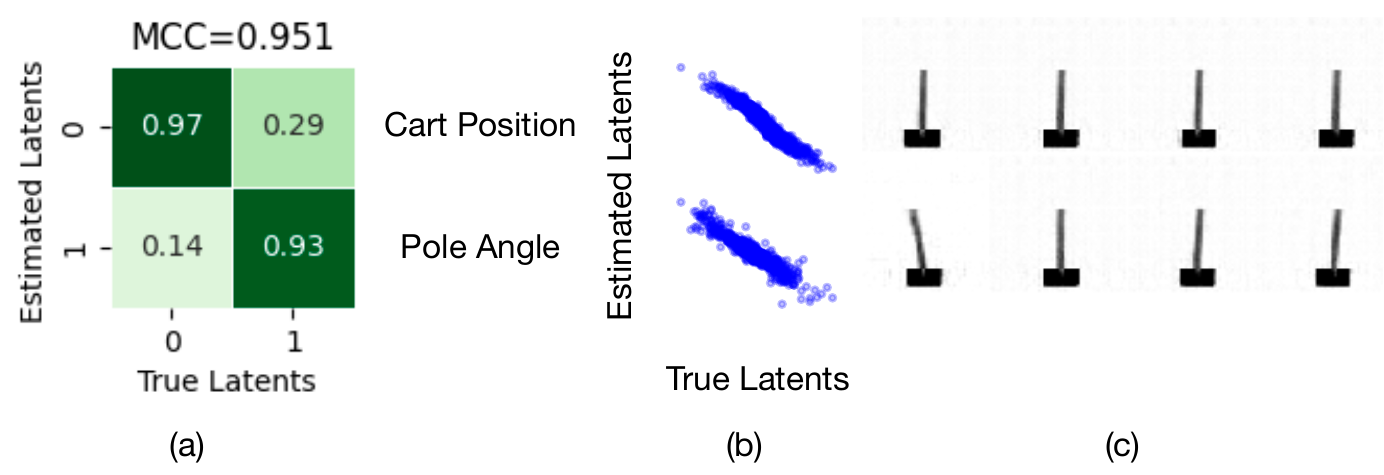}
    \caption{Modified Cartpole results: (a) MCC for causally-related factors; (b) scatterplots between estimated and true factors; and (c) latent traversal on a fixed video frame}
    \label{fig:cartpole}
\end{figure}

\begin{table*}[h]
\caption{Experiment results of CartPole dataset. The best results are shown in \textbf{bold}.}
\label{tab:cartpole}
    \centering
    \begin{tabular}{ccccccc}
        \toprule
             \multicolumn{7}{c}{\bf Mean Correlation Coefficient (MCC)}\\
        \midrule
              {\bf BetaVAE} & {\bf i-VAE} & {\bf TCL} & {\bf SlowVAE} & {\bf SKD} & {\bf TDRL} & { \OUR} \\
        \midrule
            {57.54} & {60.14} & {65.07} & {63.16} & {73.24} & {85.26} & {\bf 96.06} \\
        \bottomrule
        
    \end{tabular}
\end{table*}
\begin{table*}[h]
\caption{Supplementary experiment results of CartPole datasets on estimating domain indices $c_t$ and transition matrix $\mathbf{A}$ in \OUR, we run the experiments with five different random seeds and calculate the average with standard derivation.}
\label{tab:cartpole-supp}
    \centering
    \begin{tabular}{c|c}
        \toprule
             \multicolumn{2}{c}{\bf Unknown Nonstationary Metrics}\\
        \midrule
              {\bf Accuracy estimating $c_t$} & {\bf MSE estimating $\mathbf{A}$} \\
        \midrule
           {79.23 $\pm$ 5.33} & 5.01 $\times 10^{-2}$ $\pm$ 1.23 $\times 10^{-2}$ \\
        \bottomrule
        
    \end{tabular}
    
\end{table*}
% \label{tab:cartpole}
%     \centering
%     \begin{tabular}{cc|cc}
%         \toprule
%              \multicolumn{2}{c|}{\bf Mean Correlation Coefficien (MCC)}&\multicolumn{2}{c}{\bf \OUR~Metrics}\\
%         \midrule
%               {\bf TDRL} & { \OUR} & {\bf Acc. $c_t$} & {\bf MSE $\mathbf{A}$} \\
%         \midrule
%             {85.26 $\pm$ 1.48} & {96.06 $\pm$ 0.71} &{79.23 $\pm$ 5.33} & 5.01 $\times 10^{-2}$ $\pm$ 1.23 $\times 10^{-2}$\\
%         \midrule
        
%     \end{tabular}
%     \label{tab:cartpole}
% \end{table*}
Similar to Table~\ref{tab:syn-results} and \ref{tab:syn-results-supp}, we compare our \OUR~ with baseline methods. In addition, we also compare with SKD~\cite{berman2023multifactor}, a state-of-the-art sequential disentangle representation learning method without identifiability guarantee. In Table~\ref{tab:cartpole} and \ref{tab:cartpole-supp} we can see that compared with TDRL, our \OUR~can recover the latent processes under unknown nonstationary distribution shifts with high MCCs (>0.95) with highly accurate estimated transition matrix $\mathbf{A}$ and high quality inferred $c_t$. Specifically, by comparing the result of SKD, the MCC for SKD is better than a variety of baselines, however, we can see the distinction between well-disentangled models and identifiable models, only the models with identifiability can find the ground truth latent variables with theoretical guarantee.

\paragraph{Video data -- MoSeq Dataset}
We test \OUR~framework to analyze mouse behavior video data from Wiltschko et al.~\cite{WILTSCHKO20151121}, which represents the original application to clustering mouse behavior\footnote{Dataset can be accessed via \url{https://dattalab.github.io/moseq2-website/index.html}}, details of this dataset are available in Appendix~\ref{ap:moseq-dataset}. Since there are no ground truth independent components in this particular real-world dataset, we analyze it by several visualizations to see if different domains can be properly identified and if the patterns in the recovered independent components are consistent with the recovered domain indices. We analyze the first video clip of mouse behavior data and visualize the two phases we discovered and segmented in Fig~\ref{fig:moseq}. We can clearly see from Fig~\ref{fig:moseq} that there are different phases with the upper one actively moving and the lower one inactive. The recovered independent components showed a consistent pattern with the recovered phase or domain indices as shown in Fig~\ref{fig:moseq}.
\begin{figure}[h]
    \centering
    \includegraphics[width=0.65\linewidth]{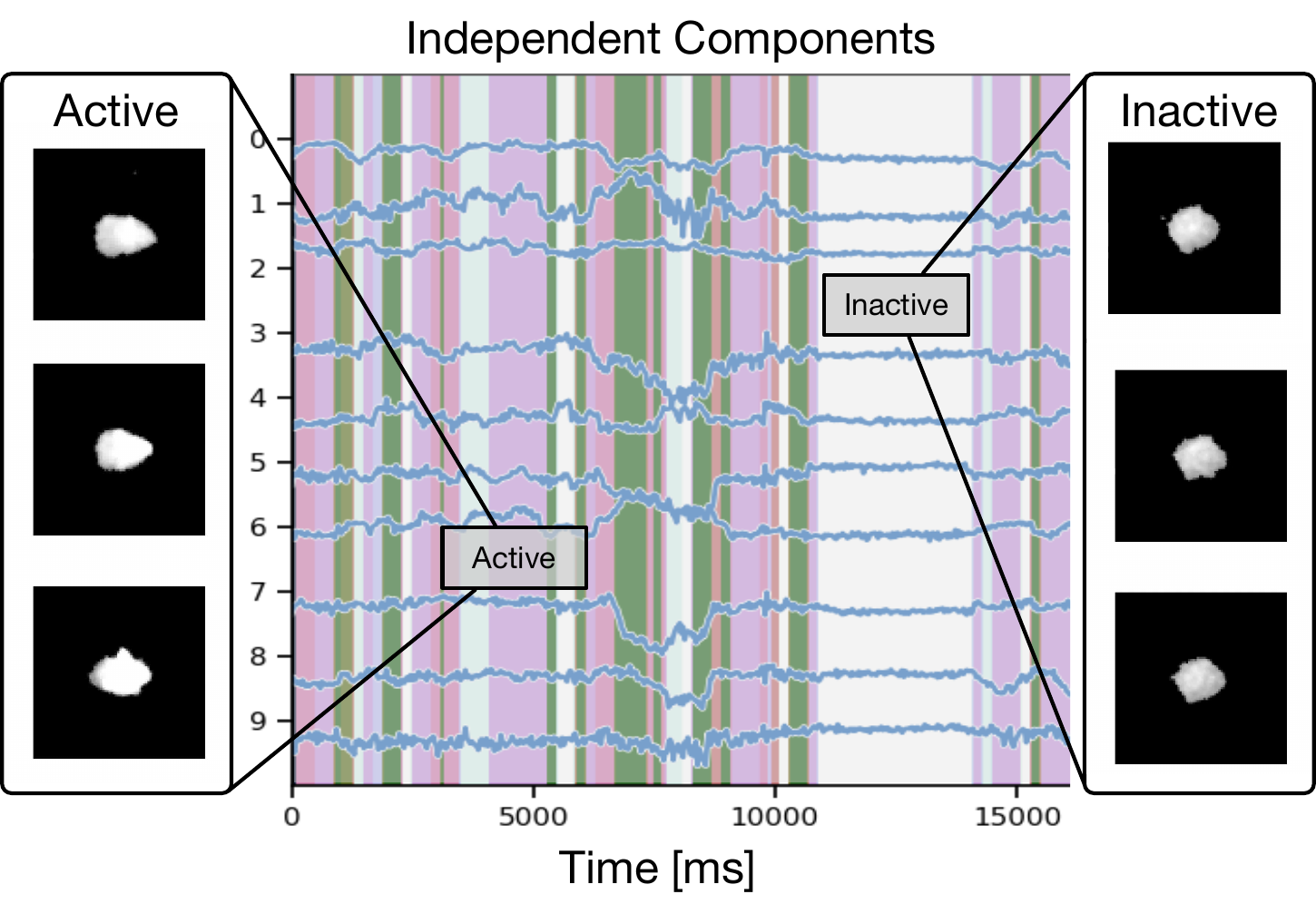}
    \caption{Result visualization of MoSeq dataset. (Active, Inactive) show two representative video frames for the active and inactive phases and (Independent Components) visualize the discovered independent components with corresponding phases tagged with different colors.}
    \label{fig:moseq}
\end{figure}
\section{Related Work}
\paragraph{Causal Discovery from Time Series} 
Understanding the causal structure in time-series data is pivotal in areas such as machine learning~\cite{berzuini2012causality},  econometrics~\cite{ghysels2016testing}, and neuroscience~\cite{friston2009causal}.
A bulk of the research in this realm emphasizes determining the temporal causal links among observed variables.
The primary techniques employed are constraint-based methods~\cite{entner2010causal}, which use conditional independence tests to ascertain causal structures, and score-based methods~\cite{murphy2002dynamic,pamfil2020dynotears}, where scores are utilized to oversee a search operation.
Some researchers also proposed a combination of these two methods~\cite{malinsky2018causal,malinsky2019learning}. Additionally, Granger causality~\cite{granger1969investigating} and its nonlinear adaptations~\cite{tank2018neural, lowe2020amortized} have gained widespread acceptance in this context.

\paragraph{Nonlinear ICA for Time Series}
Recently, the significance of temporal structures and non-stationarities has been recognized in achieving identifiability within nonlinear ICA.
Time-contrastive learning (TCL~\cite{hyvarinen2016unsupervised}) utilizes the independent sources principle, focusing on data segments' variability. 
On the other hand, Permutation-based contrastive (PCL~\cite{hyvarinen2017nonlinear}) offers a learning approach that distinguishes true independent sources from shuffled ones under the uniformly dependent assumption.
HMNLICA~\cite{halva2020hidden} integrates nonlinear ICA with an HMM to address non-stationarity without segmenting data manually.
The i-VAE~\cite{khemakhem2020variational} approach employs VAEs to capture the actual joint distribution between observed and auxiliary non-stationary domains, assuming an exponential families conditional distribution.
The recent advancements in nonlinear ICA for time series include LEAP~\cite{yao2021learning}, (i-)CITRIS~\cite{lippe2022citris,lippe2022icitris}, and TDRL~\cite{yao2022temporally}.
While LEAP introduces a novel condition emphasizing nonstationary noise, TDRL delves deeper into a non-parametric environment within a nonstationary context. 
In contrast, CITRIS recommends utilizing intervention target data for pinpointing latent causal aspects, avoiding certain constraints but necessitating active intervention access.
\paragraph{Sequential Disentanglement} Majority of existing work about sequential disentanglement focuses on architecture based on dynamical variational autoencoder (VAE)~\cite{girin2020dynamical}.
Early works~\cite{hsu2017unsupervised,li2018disentangled} separate dynamic factors from static factors using probabilistic methods.
Then auxiliary tasks with self-supervisory signals~\cite{zhu2020s3vae} were introduced.
C-DSVAE~\cite{bai2021contrastively} utilized contrastive penalty terms with data augmentation to introduce additional inductive biases.
In R-WAE~\cite{han2021disentangled}, Wasserstein distance was introduced to replace KL divergence.
To deal with video disentanglement \citep{villegas2017decomposing, tulyakov2018mocogan} explored generative adversarial network (GAN) architectures and \citep{denton2017unsupervised} introduced a recurrent model with adversarial loss. 
FAVAE, \cite{yamada2020disentangled} proposed a factorizing VAE and \citep{zhao2017learning} proposed to learn hierarchical features. Finally, SKD~\cite{berman2023multifactor} introduced a spectral loss term that leads to structured Koopman matrices and disentanglement.

\section{Conclusion and Discussion}
\textbf{Conclusion.} In this paper, we first established an identifiability theory for general sequential data with nonstationary causally-related processes under unknown distribution shifts. Then we presented \OUR, a principled framework to recover the time-delayed latent causal variable identify their causal relations from measured data, and decode high-quality domain indices under Markov assumption. Experiment results on both synthetic datasets and real-world video datasets showed that our proposed method can recover the latent causal variables and their causal relations purely from measured data with the observation of auxiliary variables or domain indices.

\textbf{Limitation.} The basic limitation of this work is that the nonstationary domain indices are assumed to follow a Markov chain.
Also, this work highly relies on the latent processes to have no instantaneous causal relations but only time-delayed influences.
If the resolution of the time series is much lower, then it is usually violated and one has to find a way to deal with instantaneous causal relations. Extending our theories and framework to address the scenarios when more flexibility in the domain indices transition is allowed (i.e. beyond discrete variables following a Markov chain) and to address instantaneous dependency or instantaneous causal relations will be some of our future work.

\textbf{Boarder Impacts.} This work proposes a theoretical analysis and technical methods to learn the causal representation from time-series data, which 
facilitate the construction of more transparent and interpretable models to understand the causal effect in the real world. 
This could be beneficial in a variety of sectors, including healthcare, finance, and technology.
In contrast, misinterpretations of causal relationships could also have significant negative implications in these fields, which must be carefully done to avoid unfair or biased predictions.
\section{Acknowledgment}
This project has been graciously funded by NGA HM04762010002, NSF IIS1955532, NSF CNS2008248, NIGMS  R01GM140467, NSF IIS2123952, NSF BCS2040381, an Amazon Research Award, NSF IIS2311990, and DARPA ECOLE HR00112390063. This project is also partially supported by NSF Grant 2229881, the National Institutes of Health (NIH) under Contract R01HL159805, a grant from Apple Inc., a grant from KDDI Research Inc., and generous gifts from Salesforce Inc., Microsoft Research, and Amazon Research. We also extend our sincere thanks to Yingzhen Li and Carles Balsells-Rodas for the insightful discussion on the identifiability result in this paper.
\bibliography{ref}
\bibliographystyle{unsrtnat}
\clearpage
\clearpage
\appendix

  \textit{\large Supplement to}\\ \ \\
      {\large \bf ``\ourtitle''}

\vspace{.2cm}
{\large Appendix organization:}

\DoToC 
\clearpage

% \beginsupplement
\setcounter{theorem}{0}
\setcounter{equation}{0}
\setcounter{table}{0}
\renewcommand{\thetheorem}{A.\arabic{theorem}}
\section{Identifiability}\label{ap:theory}
% \addcontentsline{toc}{section}{\numberline {\thesection}First Section}
Assume we observe $n$-dimensional time-series data at discrete time steps, $\mathbf{X} = \{\mathbf{x}_1, \mathbf{x}_2,\dots,\mathbf{x}_T\}$, where each $\mathbf{x}_t$ is generated from time-delayed causally related hidden components $\mathbf{z}_t \in \mathbb{R}^{n}$ by the invertible mixing function:
\begin{equation}
    \mathbf{x}_t = \mathbf{g}(\mathbf{z}_t).
\end{equation}
In addition to latent components $\mathbf{z}_t$, there is an extra hidden component $c_t$ which is a discrete variable with cardinality $\vert \, c_t \, \vert = C$, it follows first-order Markov process controlled by a $C \times C$ transition matrix $\mathbf{A}$, in which the $i,j$-th entry $A_{i,j}$ is the probability to transit from state $i$ to $j$. 
\begin{equation}
    c_1,c_2,\dots,c_t \sim \text{Markov Chain}(\mathbf{A})
\end{equation}
For $i \in \{1,\dots,n\}$, $z_{it}$, as the $i$-th component of $\mathbf{z}_t$, is generated by (some) components of history information $\mathbf{z}_{t-1}$, discrete nonstationary indicator $c_t$, and noise $\epsilon_{it}$.
\begin{equation}
    z_{it} = f_{i}(\{z_{j,t-1} \, \vert \, z_{j,t-\tau} \in \mathbf{Pa}(z_{it})\},c_t,\epsilon_{it})\hspace{1em} with \hspace{1em} \epsilon_{it} \sim p_{\epsilon_i \vert c_{t}} 
\end{equation}
where $\mathbf{Pa}(z_{it})$ is the set of latent factors that directly cause $z_{it}$, which can be any subset of $\mathbf{z}_{\text{Hx}}=\{\mathbf{z}_{t-1}, \mathbf{z}_{t-2}, \dots,\mathbf{z}_{t-L}\}$ up to history information maximum lag $L$. The components of $\mathbf{z}_{t}$ are mutually independent conditional on $\mathbf{z}_{\text{Hx}}$ and $c_t$.

\subsection{Identifiability of Nonstationary Hidden States}
\begin{theorem}[identifiability of nonstationary hidden states]
Suppose the observed data is generated following the nonlinear ICA framework as defined in Eqs. \eqref{Eq:mixing function}, \eqref{Eq:c_t} and \eqref{Eq:transition function of z}. And suppose the following assumptions hold:
\begin{enumerate}[label=\roman*]
    \item The transition in Eq. \eqref{Eq:transition function of z} is in Gaussian family, and the transition functions \(f_i\) are analytic.
    % \[
    % z_{it} = f_{i}(\{z_{j,t-1} \, \vert \, z_{j,t-\tau} \in \mathbf{Pa}(z_{it})\},c_t) + \epsilon_{it}\hspace{1em} with \hspace{1em} \epsilon_{it} \sim \mathcal{N}(0,\sigma_{i}) 
    % \]
    
    \item The mixing function in Eq. \eqref{Eq:mixing function} together with its inverse are analytic and preserve volume in terms of Def. \ref{def:volume preserving}.
\end{enumerate}
Then the Markov switching model is identifiable in terms of Def. \ref{def:identifiability_cond_transition}.
\end{theorem}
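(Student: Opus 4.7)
The plan is to reduce the statement to an application of Lemma~\ref{lemma: carles} by lifting the identifiability question from observation space to latent space through the volume-preserving mixing map. The key observation is that Lemma~\ref{lemma: carles} requires a Gaussian conditional in $\mathbf{x}$-space, which our model does not immediately satisfy (a general nonlinear $\mathbf{g}$ warps a Gaussian into a non-Gaussian density on $\mathbf{x}$); however, volume preservation repairs this mismatch by equating the observation-level density with the latent-level density pointwise.

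Concretely, I would first apply the change-of-variables formula to both the joint distribution and each conditional. Because $\mathbf{g}$ is a bijection and $|\det J_{\mathbf{g}}|\equiv 1$, one obtains $p(\mathbf{x}_{1:T}) = p_z(\mathbf{z}_{1:T})\big|_{\mathbf{z}_t=\mathbf{g}^{-1}(\mathbf{x}_t)}$ and $p(\mathbf{x}_t\mid \mathbf{x}_{t-1},c_t) = p_z(\mathbf{g}^{-1}(\mathbf{x}_t)\mid \mathbf{g}^{-1}(\mathbf{x}_{t-1}),c_t)$ pointwise, with no Jacobian correction. Thus observational equivalence of two candidate models at the $\mathbf{x}$ level is equivalent to an equation between $p_z$ and $\hat p_z$ evaluated along the reparametrization $h := \hat{\mathbf{g}}^{-1}\circ\mathbf{g}$, which is itself an analytic volume-preserving bijection on the latent space.

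Next I would verify that the latent-level process satisfies every hypothesis of Lemma~\ref{lemma: carles}. By Assumption ii, each transition $p(\mathbf{z}_t\mid \mathbf{z}_{t-1},c)$ and each initial distribution $p(\mathbf{z}_1\mid c)$ lies in the Gaussian family; analyticity of the transition functions $f_i$ then makes the conditional mean $\bm{m}(\cdot,c)$ and covariance $\bm{\Sigma}(\cdot,c)$ analytic in $\mathbf{z}_{t-1}$, giving hypothesis iii of Lemma~\ref{lemma: carles}. The unique-indexing conditions i and ii of that lemma may be assumed without loss of generality because any two states with identical transition and initial distributions are observationally indistinguishable and can be merged, as already remarked in the footnote following Eq.~\eqref{Eq:transition function of z}. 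Applying Lemma~\ref{lemma: carles} in the latent space then yields $|\mathcal{C}|=|\hat{\mathcal{C}}|$ together with a permutation $\sigma$ aligning the two state spaces so that Markov-chain probabilities and Gaussian kernels match in the sense of Def.~\ref{def:identifiability_cond_transition}. Undoing the change of variables through volume preservation translates the four clauses of Def.~\ref{def:identifiability_cond_transition} into their $\mathbf{x}$-level analogues, which is the conclusion sought.

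The main obstacle I anticipate is the final transfer step: although the Gaussian-MSM hypotheses are cleanly verified at the latent level, the argument must propagate the permutation conclusion of Lemma~\ref{lemma: carles} across the nontrivial reparametrization $h$ without introducing residual ambiguity in the state structure. Volume preservation is essential here precisely because it removes the Jacobian factor that would otherwise destroy the Gaussian form of the conditional densities, and because it makes pointwise density equality at $\mathbf{x}_{1:T}$ coincide with pointwise density equality of the pulled-back latent densities. With these two features in hand, the conclusions of Lemma~\ref{lemma: carles} lift verbatim to the observation level, which closes the argument.
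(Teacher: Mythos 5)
Your route differs from the paper's in \emph{where} Lemma~\ref{lemma: carles} is invoked, and the difference is not cosmetic: it opens a gap that your proposal flags but does not close. The paper applies the lemma \emph{in observation space}: it argues that, because $\mathbf{g}$ is invertible and volume-preserving, $p(\mathbf{x}_t\mid\mathbf{x}_{t-1},c_t)=p(\mathbf{z}_t\mid\mathbf{z}_{t-1},c_t)\cdot|\mathbf{J}_{\mathbf{g}}|$ with $|\mathbf{J}_{\mathbf{g}}|=1$ is itself a member of the nonlinear Gaussian family $\mathcal{G}_{\mathcal{C}}$, with analytic mean $\bm{m}(\mathbf{x}_{t-1},c)=\mathbf{g}\circ\mathbf{f}(\mathbf{g}^{-1}(\mathbf{x}_{t-1}),c)$ and unique indexing inherited through the invertibility of $\mathbf{g}$; Lemma~\ref{lemma: carles} then delivers Def.~\ref{def:identifiability_cond_transition} directly, with no transfer step, because that definition is stated entirely in terms of observation-space densities. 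You instead observe that a nonlinear volume-preserving pushforward of a Gaussian need not be Gaussian in $\mathbf{x}$ (a fair point, which the paper's own remark about Gaussian-preserving transformations implicitly concedes), decline to verify the lemma's hypotheses in observation space, and propose to verify them in latent space and then ``lift'' the conclusion back.

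The gap is in that lift. Lemma~\ref{lemma: carles} requires two members of the \emph{same} family on a \emph{common} space satisfying $p=\hat{p}$ pointwise. Observational equivalence pulls back, via volume preservation, only to $p_z(\mathbf{z}_{1:T})=\hat{p}_{\hat{z}}\bigl(h(\mathbf{z}_1),\dots,h(\mathbf{z}_T)\bigr)$ with $h=\hat{\mathbf{g}}^{-1}\circ\mathbf{g}$ unknown and nonlinear. To apply the lemma in the true latent coordinates you must exhibit $\hat{p}_{\hat{z}}\circ h$ as a Gaussian MSM in $\mathbf{z}$, but a Gaussian kernel composed with a nonlinear volume-preserving bijection is exactly the non-Gaussian object you set out to avoid; the obstruction you evaded in observation space reappears unchanged in latent space. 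Your closing claim that the conclusions of Lemma~\ref{lemma: carles} ``lift verbatim'' is therefore unsupported --- that lift is essentially the whole content of the theorem, not a consequence of volume preservation. To repair the argument you either need the paper's reading of the assumptions, under which the observation-level conditionals genuinely lie in $\mathcal{G}_{\mathcal{C}}$ so the lemma applies where Def.~\ref{def:identifiability_cond_transition} lives, or an extension of Lemma~\ref{lemma: carles} to equality up to an unknown diffeomorphism, which is not available.
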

\begin{proof}
Since the transition in Eq. \eqref{Eq:transition function of z} is in Gaussian family, the conditional density function \(p(\mathbf{z_t}\mid\mathbf{z_{t-1}},c_t)\) can be expressed as
\begin{equation}
    p(\mathbf{z_t}\mid\mathbf{z_{t-1}},c_t) = \mathcal{N}(\bm{m}(\mathbf{z}_{t-1},c_t), \bm{\Sigma}(\mathbf{z}_{t-1},c_t))
\end{equation}
which is Gaussian family and since for different value of \(c_t\) the transition functions are not identical then the unique indexing is also satisfied. 

Then the latent \(\mathbf{z}_t\) are passed through the mixing function which is invertible and volume-preserving. Considering such mapping \(\mathbf{g}\) from \(\mathbf{z}_t\) to \(\mathbf{x}_t\):
\begin{equation}
    p(\mathbf{z}_t \mid \mathbf{z}_{t-1}, c_t) = p(\mathbf{z}_t \mid \mathbf{x}_{t-1}, c_t) = p(\mathbf{x}_t \mid \mathbf{x}_{t-1}, c_t) \cdot |\mathbf{J}_{\mathbf{g}}|
\end{equation}
Since \(\mathbf{g}\) is volume-preserving, \(|\mathbf{J}_{\mathbf{g}}| = 1\), \(p(\mathbf{z}_t \mid \mathbf{z}_{t-1}, c_t)\) is gaussian implies \(p(\mathbf{x}_t \mid \mathbf{x}_{t-1}, c_t)\) is also gaussian. The unique indexing of \(p(\mathbf{z}_t \mid \mathbf{z}_{t-1}, c_t)\) together with the invertibility of \(\mathbf{g}\) implies the unique indexing of \(p(\mathbf{x}_t \mid \mathbf{x}_{t-1}, c_t)\). Similar arguments can be made for initial distributions.

Lastly, we can construct an analytic function for the mean, \(\bm{m}(\mathbf{x_{t-1},c_t}) = \mathbf{g}\circ\mathbf{f}(\mathbf{g}^{-1}(\mathbf{x}_{t-1}),c_t)\). Similar construction can be made for variance, even though it may be hard to write down the explicit form, it still is analytic.

Then the conclusion of Lemma \ref{lemma: carles} applies, which gives the identifiability of the Markov switching part up to permutation as defined in Def. \ref{def:identifiability_cond_transition}.
\end{proof}
Note that the analytic form requirement is to ensure that the intersection of moments has zero measure from unique indexing.
In our case, this property comes purely from the unique indexing for the mean, which does not actually require the analytical form for the variance.
In addition, volume-preserving constraints can be further relaxed for any Gaussian-preserving transformation that is discussed in \cite{morel2023turning}.
Furthermore, the Gaussian family assumption here may also be relaxed as long as the liear independence property described in \cite{balsells-rodas2023on} can be fulfilled, and correspondingly, the Gaussian preserving transformation can also be further relaxed. We leave those discussions in future work.

\subsection{Identifiability of Latent Causal Processes}

To incorporate nonlinear ICA into the Markov Assumption we define the emission distribution $p(\mathbf{x}_{t} \, \vert \, \mathbf{x}_{t-1},c)$ as a deep latent variable model. First, the latent independent component variables $\mathbf{z}_t \in \mathbb{R}^{n}$ are generated from a factorial prior, given the hidden state $c_t$ and previous $\mathbf{z}_{t-1}$, as
\begin{equation}
    p(\mathbf{z}_t \, \vert \, \mathbf{z}_{t-1}, c_{t}) = \prod_{k=1}^{n} p(z_{kt} \, \vert \, \mathbf{z}_{t-1},c_{t}).
\end{equation}
Second, the observed data $\mathbf{x}_t \in \mathbb{R}^{n}$ is generated by a nonlinear mixing function as in Eq.~\eqref{Eq:mixing function} which is assumed to be bijective with inverse given by $\mathbf{z}_t = \mathbf{g}^{-1}(\mathbf{x}_t)$. Let $\eta_{kt}(c_t) \triangleq \log p(z_{kt} | \mathbf{z}_{t-1}, c_t)$, and assume that $\eta_{kt}(c_t)$ is twice differentiable in $z_{kt}$ and is differentiable in $z_{l,t-1}$, $l=1,2,...,n$. Note that the parents of $z_{kt}$ may be only $c_t$ and a subset of $\mathbf{z}_{t-1}$; if $z_{l,t-1}$ is not a parent of $z_{kt}$, then $\frac{\partial \eta_k}{\partial z_{l,t-1}} = 0$.

\begin{theorem}
Suppose there exists an invertible function $\hat{\mathbf{g}}^{-1}$, which is the estimated demixing function that maps $\mathbf{x}_t$ to $\hat{\mathbf{z}}_t$, i.e., 
\begin{equation}
    \hat{\mathbf{z}}_t = \hat{\mathbf{g}}^{-1}(\mathbf{x}_t)
\end{equation}
such that the components of $\hat{\mathbf{z}}_t$ are mutually  independent conditional on $\hat{\mathbf{z}}_{t-1}$. 
Let 
\begin{equation}
\begin{split}
    \mathbf{v}_{k,t}(c) &\triangleq \Big(\frac{\partial^2 \eta_{kt}(c)}{\partial z_{k,t} \partial z_{1,t-1}}, \frac{\partial^2 \eta_{kt}(c)}{\partial z_{k,t} \partial z_{2,t-1}}, ..., \frac{\partial^2 \eta_{kt}(c)}{\partial z_{k,t} \partial z_{n,t-1}} \Big)^\intercal,\\ 
    \mathring{\mathbf{v}}_{k,t}(c) &\triangleq \Big(\frac{\partial^3 \eta_{kt}(c)}{\partial z_{k,t}^2 \partial z_{1,t-1}}, \frac{\partial^3 \eta_{kt}(c)}{\partial z_{k,t}^2 \partial z_{2,t-1}}, ..., \frac{\partial^3 \eta_{kt}(c)}{\partial z_{k,t}^2 \partial z_{n,t-1}} \Big)^\intercal. 
\end{split}
\end{equation}
And 
\begin{equation}
\begin{split}
\mathbf{s}_{kt} &\triangleq \Big( \mathbf{v}_{kt}(1)^\intercal, ..., 
\mathbf{v}_{kt}(C)^\intercal, 
 \frac{\partial^2 \eta_{kt}(2)}{\partial z_{kt}^2 } - 
 \frac{\partial^2 \eta_{kt}(1)}{\partial z_{kt}^2 }, ...,
 \frac{\partial^2 \eta_{kt}(C)}{\partial z_{kt}^2 } - 
 \frac{\partial^2 \eta_{kt}(C-1)}{\partial z_{kt}^2 }
 \Big)^\intercal,\\
\mathring{\mathbf{s}}_{kt} &\triangleq \Big( \mathring{\mathbf{v}}_{kt}(1)^\intercal, ..., 
\mathring{\mathbf{v}}_{kt}(C)^\intercal, 
\frac{\partial \eta_{kt}(2)}{\partial z_{kt} } - 
 \frac{\partial \eta_{kt}(1)}{\partial z_{kt} }, ...,
 \frac{\partial \eta_{kt}(C)}{\partial z_{kt} } - 
 \frac{\partial \eta_{kt}(C-1)}{\partial z_{kt} }
 \Big)^\intercal.
\end{split}
\end{equation}
If for each value of $\mathbf{z}_t$, $\mathbf{s}_{1t}, \mathring{\mathbf{s}}_{1t}, \mathbf{v}_{2t}, \mathring{\mathbf{s}}_{2t}, ..., \mathbf{s}_{nt}, \mathring{\mathbf{s}}_{nt}$, as 
$2n$ function vectors $\mathbf{s}_{k,t}$ and $\mathring{\mathbf{s}}_{k,t}$, with $k=1,2,...,n$, are linearly independent, then $\hat{\mathbf{z}}_{t}$ must be an invertible, component-wise transformation of a permuted version of $\mathbf{z}_t$.
\end{theorem}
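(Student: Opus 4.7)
The plan is to analyze the composite $\mathbf{h} := \hat{\mathbf{g}}^{-1}\circ \mathbf{g}$, which by invertibility of $\mathbf{g}$ and $\hat{\mathbf{g}}$ is a smooth bijection satisfying $\hat{\mathbf{z}}_t = \mathbf{h}(\mathbf{z}_t)$ at every time step (with the same $\mathbf{h}$ applied to each slice), and to show that its Jacobian $J_{\mathbf{h}}(\mathbf{z}_t)$ is the product of a permutation and a diagonal matrix at every $\mathbf{z}_t$. Since $\mathbf{h}$ is continuous and globally invertible, this permutation pattern is locally, hence globally, constant, and the claim that $\hat{\mathbf{z}}_t$ is a component-wise transformation of a permuted $\mathbf{z}_t$ follows immediately.

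Applying the change-of-variables formula to $\mathbf{h}$ together with the two factorization hypotheses (conditional independence for the true and estimated components, given $\mathbf{z}_{t-1}$ and the now-identified $c_t$) yields the pointwise identity
\[
\sum_{k=1}^{n} \eta_{kt}(c) \;=\; \sum_{i=1}^{n}\hat{\eta}_{it}(c) \,+\, \log\bigl|\det J_{\mathbf{h}}(\mathbf{z}_t)\bigr|,
\]
where $\hat{\eta}_{it}(c) := \log \hat p(\hat z_{it}\mid\hat{\mathbf{z}}_{t-1},c)$, and where the log-determinant depends on $\mathbf{z}_t$ alone — neither on $\mathbf{z}_{t-1}$ nor on $c$. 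I will exploit this last property repeatedly to make the Jacobian term drop out of the key equations.

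Next I extract two families of linear equations. Differentiating the identity above once with respect to $z_{j,t-1}$ and once with respect to $z_{kt}$ kills every summand on the left except the $k$-th, producing exactly the $j$-th coordinate of $\mathbf{v}_{kt}(c)$; the log-determinant vanishes. The right-hand side becomes a sum over $i, m$ that is bilinear in the Jacobian entries $\partial \hat z_{it}/\partial z_{kt}$ (at $\mathbf{z}_t$) and $\partial \hat z_{m,t-1}/\partial z_{j,t-1}$ (at $\mathbf{z}_{t-1}$), with the estimated mixed second derivatives $\partial^2\hat\eta_{it}(c)/(\partial\hat z_{it}\,\partial\hat z_{m,t-1})$ as the free quantities. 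Differentiating once more in $z_{kt}$ yields the analogous identity with $\mathring{\mathbf{v}}_{kt}(c)$ on the left and contributions from both estimated third derivatives and the second-order Jacobian terms $\partial^2\hat z_{it}/\partial z_{kt}^2$ on the right. Stacking these identities across $c=1,\ldots,C$ reproduces the $\mathbf{v}_{kt}(c)$ blocks of $\mathbf{s}_{kt}$ and the $\mathring{\mathbf{v}}_{kt}(c)$ blocks of $\mathring{\mathbf{s}}_{kt}$. The remaining tail blocks, built from consecutive-$c$ differences such as $\partial^2\eta_{kt}(c)/\partial z_{kt}^2-\partial^2\eta_{kt}(c-1)/\partial z_{kt}^2$, are obtained by differentiating the identity purely in $z_{kt}$ (once and twice) and subtracting across adjacent values of $c$, causing the $c$-independent log-determinant to cancel. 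The net outcome is that each $(\mathbf{s}_{kt},\mathring{\mathbf{s}}_{kt})$ is written as a linear combination of corresponding estimated vectors indexed by $i$, with coefficients that are polynomial in entries of $J_{\mathbf{h}}$ at $\mathbf{z}_t$ and at $\mathbf{z}_{t-1}$.

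The assumed linear independence of the $2n$ function vectors $\{\mathbf{s}_{kt},\mathring{\mathbf{s}}_{kt}\}_{k=1}^n$ then forces a very restrictive structure on this coefficient matrix: because the first-order and third-order families carry both $\partial\hat z_{it}/\partial z_{kt}$-linear and $(\partial\hat z_{it}/\partial z_{kt})^2$-type contributions, any row of $J_{\mathbf{h}}(\mathbf{z}_t)$ with two or more nonzero entries would make the blocks associated to two distinct $k$'s collinear (sharing the same estimated-index $i$ factor up to rescaling), contradicting independence. Hence each row of $J_{\mathbf{h}}(\mathbf{z}_t)$ has at most one nonzero entry, so $J_{\mathbf{h}}(\mathbf{z}_t)$ is a scaled permutation at every $\mathbf{z}_t$, and $\mathbf{h}$ has the required form. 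The main obstacle I anticipate is in the preceding step: careful bookkeeping of where the log-determinant contributes, which derivative combinations correspond to each block of $\mathbf{s}_{kt}$ and $\mathring{\mathbf{s}}_{kt}$, and verifying that stacking and differencing across $c$ delivers precisely the structure encoded in the theorem's definitions. Once that matching is established, the final appeal to linear independence is conceptually short.
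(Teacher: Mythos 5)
Your setup (working with $\mathbf{h}=\hat{\mathbf{g}}^{-1}\circ\mathbf{g}$, relating the two factorized log-densities by change of variables, and killing the log-determinant by differentiating in the lagged variables and by differencing across adjacent values of $c$) matches the paper's proof in its ingredients, but the way you invoke the linear-independence hypothesis does not work, and that is where the content of the proof lives. By differentiating your identity with respect to the \emph{true} coordinates $z_{kt}$ and $z_{j,t-1}$ you express each $\mathbf{s}_{kt}$ and $\mathring{\mathbf{s}}_{kt}$ as a linear combination of \emph{estimated} quantities with coefficients built from $J_{\mathbf{h}}$. Linear independence of the $2n$ vectors $\{\mathbf{s}_{kt},\mathring{\mathbf{s}}_{kt}\}$ then only tells you that the estimated vectors span a space of dimension at least $2n$; it does not constrain individual rows of $J_{\mathbf{h}}$ to have a single nonzero entry, since any invertible coefficient matrix is compatible with linear independence of the images. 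Your claim that a row with two nonzero entries would make two blocks ``collinear'' is not a consequence of the independence of the $\mathbf{s}$'s, so the final step does not close.

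The missing idea is the one that produces a \emph{homogeneous} linear relation among the $\mathbf{s}_{kt},\mathring{\mathbf{s}}_{kt}$ whose coefficients are exactly the quantities you want to kill. Writing $\mathbf{z}_t=\mathbf{h}(\hat{\mathbf{z}}_t)$ with Jacobian $\mathbf{H}_t$, the paper uses the mutual conditional independence of the components of $\hat{\mathbf{z}}_t$ (which you only use to factorize $\hat{p}$) to conclude that $\partial^2\log p(\hat{\mathbf{z}}_t\mid\hat{\mathbf{z}}_{t-1},c_t)/\partial\hat z_{it}\partial\hat z_{jt}=0$ for all $i\neq j$; expanding $\log p(\hat{\mathbf{z}}_t\mid\hat{\mathbf{z}}_{t-1},c_t)=\sum_k\eta_{kt}(c_t)+\log|\mathbf{H}_t|$ by the chain rule turns this into
\[
\sum_{k=1}^n\Big(\frac{\partial^2\eta_{kt}(c_t)}{\partial z_{kt}^2}\,\mathbf{H}_{kit}\mathbf{H}_{kjt}+\frac{\partial\eta_{kt}(c_t)}{\partial z_{kt}}\,\frac{\partial\mathbf{H}_{kit}}{\partial\hat z_{jt}}\Big)-\frac{\partial^2\log|\mathbf{H}_t|}{\partial\hat z_{it}\partial\hat z_{jt}}\equiv 0 .
\]
Applying your two tricks to \emph{this} equation --- differentiating in $z_{l,t-1}$ and differencing across adjacent $c$, both of which eliminate the $\log|\mathbf{H}_t|$ term --- yields $\sum_k\big(\mathbf{H}_{kit}\mathbf{H}_{kjt}\,\mathbf{s}_{kt}+\frac{\partial\mathbf{H}_{kit}}{\partial\hat z_{jt}}\,\mathring{\mathbf{s}}_{kt}\big)=0$, and only now does linear independence force $\mathbf{H}_{kit}\mathbf{H}_{kjt}=0$ for all $k$ and $i\neq j$, i.e.\ at most one nonzero entry per row of $\mathbf{H}_t$, from which invertibility gives the permutation-plus-componentwise structure. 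Without routing the conditional independence of $\hat{\mathbf{z}}_t$ through this vanishing cross-derivative in the estimated coordinates, the linear-independence assumption has nothing to act on.
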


\begin{proof}
Combining \eqref{Eq:mixing function} and \eqref{Eq:est demixing function} gives $\mathbf{z}_t = (\mathbf{g}^{-1}\circ\hat{\mathbf{g}})(\hat{\mathbf{z}}_t) = \mathbf{h}(\hat{\mathbf{z}}_t)$, where $\mathbf{h} \triangleq \mathbf{g}^{-1}\circ\hat{\mathbf{g}}$. Since both $\mathbf{g}$ and $\hat{\mathbf{g}}$ are invertible, $\mathbf{h}$ is invertible. Let $\mathbf{H}_t$ be the Jacobian matrix of the transformation $\mathbf{h}(\hat{\mathbf{z}}_t)$, and denote by $\mathbf{H}_{kit}$ its $(k,i)$th entry.

First, it is straightforward to see that if the components of $\hat{\mathbf{z}}_t$ are mutually independent conditional on previous $\hat{\mathbf{z}}_{t-1}$ and current $c_t$, then for any $i\neq j$, $\hat{z}_{it}$ and $\hat{z}_{jt}$ are conditionally independent given $\hat{\mathbf{z}}_{t-1} \cup(\hat{\mathbf{z}}_{t}\setminus \{\hat{z}_{it}, \hat{z}_{jt}\}) \cup \{c_t\}$. Mutual independence of the components of $\hat{\mathbf{z}}_t$ conditional on $\hat{\mathbf{z}}_{t-1}$ implies that $\hat{z}_{it}$ is independent from $\hat{\mathbf{z}}_{t}\setminus \{\hat{z}_{it}, \hat{z}_{jt}\}$ conditional on $\hat{\mathbf{z}}_{t-1}$ and $c_t$, i.e.,
\begin{equation}\small
    p(\hat{z}_{it} \,|\, \hat{\mathbf{z}}_{t-1}, c_t) = p(\hat{z}_{it} \,|\, \hat{\mathbf{z}}_{t-1}\cup(\hat{\mathbf{z}}_{t}\setminus \{\hat{z}_{it}, \hat{z}_{jt}\}) ,c_t). \nonumber
\end{equation}
 At the same time, it also implies $\hat{z}_{it}$ is independent from $\hat{\mathbf{z}}_{t}\setminus \{\hat{z}_{it}\}$ conditional on $\hat{\mathbf{z}}_{t-1}$ and $c_t$, i.e., 
\begin{equation}\small
    p(\hat{z}_{it} \,|\, \hat{\mathbf{z}}_{t-1}, c_t) = p(\hat{z}_{it} \,|\, \hat{\mathbf{z}}_{t-1}\cup(\hat{\mathbf{z}}_{t}\setminus \{\hat{z}_{it}\}), c_t). \nonumber
\end{equation}
 Combining the above two equations gives 
 \begin{equation}\small
     p(\hat{z}_{it} \,|\, \hat{\mathbf{z}}_{t-1}\cup(\hat{\mathbf{z}}_{t}\setminus \{\hat{z}_{it}\}), c_t) = p(\hat{z}_{it} \,|\,\hat{\mathbf{z}}_{t-1} \cup(\hat{\mathbf{z}}_{t}\setminus \{\hat{z}_{it}, \hat{z}_{jt}\}), c_t), \nonumber
 \end{equation}
i.e., for $i\neq j$, $\hat{z}_{it}$ and $\hat{z}_{jt}$ are conditionally independent given $\hat{\mathbf{z}}_{t-1} \cup(\hat{\mathbf{z}}_{t}\setminus \{\hat{z}_{it}, \hat{z}_{jt}\}) \cup \{c_t\}$.
 
 We then make use of the fact that if  $\hat{z}_{it}$ and $\hat{z}_{jt}$ are conditionally independent given $\hat{\mathbf{z}}_{t-1} \cup(\hat{\mathbf{z}}_{t}\setminus \{\hat{z}_{it}, \hat{z}_{jt}\}) \cup \{c_t\}$, then 
 \begin{equation}\small
 \frac{\partial^2 \log p(\hat{\mathbf{z}}_t, \hat{\mathbf{z}}_{t-1}, c_t)}{\partial \hat{z}_{it} \partial \hat{z}_{jt}} = 0, \nonumber
 \end{equation}
 assuming the cross second-order derivative exists \cite{spantini2018inference}. Since $p(\hat{\mathbf{z}}_t, \hat{\mathbf{z}}_{t-1}, c_t) = p(\hat{\mathbf{z}}_t \,|\, \hat{\mathbf{z}}_{t-1}, c_t)p(\hat{\mathbf{z}}_{t-1}, c_t)$ while $p(\hat{\mathbf{z}}_{t-1}, c_t)$ does not involve $\hat{z}_{it}$ or $\hat{z}_{jt}$, the above equality is equivalent to 
 \begin{equation}\small \label{Eq:iszero}
     \frac{\partial^2 \log p(\hat{\mathbf{z}}_t \,|\,\hat{\mathbf{z}}_{t-1}, c_t)}{\partial \hat{z}_{it} \partial \hat{z}_{jt}} = 0.
\end{equation}

Then for any $c_t$, the Jacobian matrix of the mapping from $(\mathbf{x}_{t-1}, \hat{\mathbf{z}}_t)$ to $( \mathbf{x}_{t-1}, \mathbf{z}_t)$ is $\begin{bmatrix}\mathbf{I} & \mathbf{0} \\ * & \mathbf{H}_t \end{bmatrix}$, where $*$ stands for a matrix, and the (absolute value of the) determinant of this Jacobian matrix is $|\mathbf{H}_t|$. Therefore $p(\hat{\mathbf{z}}_t, \mathbf{x}_{t-1}\vert c_t) = p({\mathbf{z}}_t, \mathbf{x}_{t-1}\vert c_t)\cdot |\mathbf{H}_t|$. Dividing both sides of this equation by $p(\mathbf{x}_{t-1}\vert c_t)$ gives 
 \begin{equation}\small \label{Eq:J_trans}
 p(\hat{\mathbf{z}}_t \,|\, \mathbf{x}_{t-1}, c_t) = p({\mathbf{z}}_t \,|\, \mathbf{x}_{t-1}, c_t) \cdot |\mathbf{H}_t|. 
 \end{equation}
 Since $p({\mathbf{z}}_t \,|\, {\mathbf{z}}_{t-1}, c_t) = p({\mathbf{z}}_t \,|\, \mathbf{g}({\mathbf{z}}_{t-1}), c_t) = p({\mathbf{z}}_t \,|\, {\mathbf{x}}_{t-1}, c_t)$ and similarly   $p(\hat{\mathbf{z}}_t \,|\, \hat{\mathbf{z}}_{t-1}, c_t) = p(\hat{\mathbf{z}}_t \,|\, {\mathbf{x}}_{t-1}, c_t)$, Eq.~\ref{Eq:J_trans} tells us
 \begin{equation}\small
     \log p(\hat{\mathbf{z}}_t \,|\, \hat{\mathbf{z}}_{t-1}, c_t) = \log p({\mathbf{z}}_t \,|\, {\mathbf{z}}_{t-1}, c_t) + \log |\mathbf{H}_t| = \sum_{k=1}^n \eta_{kt}(c_t) + \log |\mathbf{H}_t|.
 \end{equation}
 Its partial derivative w.r.t. $\hat{z}_{it}$ is 
 \begin{flalign} \nonumber
  \frac{\partial \log p(\hat{\mathbf{z}}_t \,|\, \hat{\mathbf{z}}_{t-1}, c_t)}{\partial \hat{z}_{it}} &=  \sum_{k=1}^n \frac{\partial \eta_{kt}(c_t) }{\partial z_{kt}} \cdot \frac{\partial z_{kt}}{\partial \hat{z}_{it}} - \frac{\partial \log |\mathbf{H}_t|}{\partial \hat{z}_{it}} \\ \nonumber
  &= \sum_{k=1}^n \frac{\partial \eta_{kt}(c_t)}{\partial z_{kt}} \cdot \mathbf{H}_{kit} - \frac{\partial \log |\mathbf{H}_t|}{\partial \hat{z}_{it}}.
 \end{flalign}
  Its second-order cross-derivative is
 \begin{flalign} \label{Eq:cross}
  \frac{\partial^2 \log p(\hat{\mathbf{z}}_t \,|\, \hat{\mathbf{z}}_{t-1}, c_t)}{\partial \hat{z}_{it} \partial \hat{z}_{jt}}
  &= \sum_{k=1}^n \Big( \frac{\partial^2 \eta_{kt}(c_t)}{\partial z_{kt}^2 } \cdot \mathbf{H}_{kit}\mathbf{H}_{kjt} + \frac{\partial \eta_{kt}(c_t)}{\partial z_{kt}} \cdot \frac{\partial \mathbf{H}_{kit}}{\partial \hat{z}_{jt}} \Big)- \frac{\partial^2 \log |\mathbf{H}_t|}{\partial \hat{z}_{it} \partial \hat{z}_{jt}}.
 \end{flalign}
 
 The above quantity is always 0 according to Eq.~\eqref{Eq:iszero}. Therefore, for each $l=1,2,...,n$ and each value $z_{l,t-1}$,  its partial derivative w.r.t.
 $z_{l,t-1}$ is always 0. That is,
 \begin{flalign}\label{Eq:lind-ap}
  \frac{\partial^3 \log p(\hat{\mathbf{z}}_t \,|\, \hat{\mathbf{z}}_{t-1}, c_t)}{\partial \hat{z}_{it} \partial \hat{z}_{jt} \partial z_{l,t-1}}
  &= \sum_{k=1}^n \Big( \frac{\partial^3 \eta_{kt}(c_t)}{\partial z_{kt}^2 \partial z_{l,t-1}} \cdot \mathbf{H}_{kit}\mathbf{H}_{kjt} + \frac{ \partial^2 \eta_{kt}(c_t)}{\partial z_{kt} \partial z_{l,t-1}}  \cdot \frac{\partial \mathbf{H}_{kit}}{\partial \hat{z}_{jt} } \Big) \equiv 0,
 \end{flalign}
 where we have made use of the fact that entries of $\mathbf{H}_t$ do not depend on $z_{l,t-1}$. Using different values $r$ for $c_t$ in Eq.~\eqref{Eq:cross} take the difference of this equation across them gives
\begin{flalign}
 \nonumber \label{Eq:cross3}
  &\frac{\partial^2 \log p(\hat{\mathbf{z}}_t \,|\, \hat{\mathbf{z}}_{t-1}; r+1)}{\partial \hat{z}_{it} \partial \hat{z}_{jt}} - \frac{\partial^2 \log p(\hat{\mathbf{z}}_t \,|\, \hat{\mathbf{z}}_{t-1}; r)}{\partial \hat{z}_{it} \partial \hat{z}_{jt}}\\  
 =& \sum_{k=1}^n \Big[ \Big(\frac{\partial^2 \eta_{kt}(r+1)}{\partial z_{kt}^2 } - \frac{\partial^2 \eta_{kt}(r)}{\partial z_{kt}^2 } \Big) \cdot \mathbf{H}_{kit}\mathbf{H}_{kjt} + \Big(\frac{\partial \eta_{kt}(r+1)}{\partial z_{kt}} - \frac{\partial \eta_{kt}(r)}{\partial z_{kt}}\Big) \cdot \frac{\partial \mathbf{H}_{kit}}{\partial \hat{z}_{jt}} \Big] \equiv 0.
\end{flalign}
 If for any value of $\mathbf{z}_t$, $\mathbf{s}_{1t}, \mathring{\mathbf{s}}_{1t}, \mathbf{s}_{2t}, \mathring{\mathbf{s}}_{2t}, ..., \mathbf{s}_{nt}, \mathring{\mathbf{s}}_{nt}$ are linearly independent, to make the above equation hold true, one has to set $\mathbf{H}_{kit}\mathbf{H}_{kjt} = 0$ or $i\neq j$. That is, in each row of $\mathbf{H}_t$ there is only one non-zero entry. Since $h$ is invertible, then $\mathbf{z}_{t}$ must be an invertible, component-wise transformation of a permuted version of $\hat{\mathbf{z}}_t$.
\end{proof}
So far, the identifiability result has been established without observing the nonstationarity indicators such as domain indices.
\subsection{Discussion on Assumptions in Theorem \ref{thm2: identifiability of z}}
This condition was initially introduced in GCL~\citep{hyvarinen2019nonlinear}, namely, ``sufficient variability'', to extend the modulated exponential families \citep{hyvarinen2016unsupervised} to general modulated distributions. Essentially, the condition says that the nonstationary domains $c$ must have a sufficiently complex and diverse effect on the transition distributions. In other words, if the underlying distributions are composed of relatively many domains of data, the condition generally holds true. Loosely speaking, the sufficient variability holds if the modulation of by $c$ on the conditional  distribution $q(z_{it} \vert \mathbf{z}_{\text{Hx}}, c)$ is not too simple in the following sense:
\begin{enumerate}

\item  Higher order of $k$ ($k>1$) is required. If $k=1$, the sufficient variability cannot hold;

\item The modulation impacts $\lambda_{ij}$ by $\mathbf{u}$ must be linearly independent across domains $c$. The sufficient statistics functions $q_{ij}$ cannot be all linear, i.e., we require higher-order statistics.

\end{enumerate}

Further details of this example can be found in Appendix B of \cite{hyvarinen2019nonlinear} and Appendix S1.4.1 of \cite{yao2022temporally}. In summary, we need the domains denoted by $c$ to have diverse (i.e., distinct influences) and complex impacts on the underlying data generation process.

In Theorem \ref{thm1:identifiability of nonstationary hidden states}, we assume that the transition in latent space is in a Gaussian family, and then one may argue that the sufficient variability assumption cannot hold true. We can show that as long as the transition is not in the case that latents have additive Gaussian noise, the sufficient variability can still hold. The density of the latents can be expressed as:
\begin{equation}
    p(z_{it}\mid \mathbf{z}_{t-1}, c_t) = \mathcal{N}(m(\mathbf{z}_{t-1}, c_t), \sigma(\mathbf{z}_{t-1}, c_t)).
\end{equation}
Compared with the additive Gaussian noise case:
\begin{equation}
    p(z_{it}\mid \mathbf{z}_{t-1}, c_t) = \mathcal{N}(m(\mathbf{z}_{t-1}, c_t), \sigma).
\end{equation}
We can see that in additive Gaussian noise case the variance is a constant in terms of \(\mathbf{z}_t\) and \(\mathbf{z}_{t-1}\). Then we write down the partial derivative of \(\eta_{kt}(c_t)\) with respect to \(z_{kt}\) and \(z_{l, t-1}\):
\begin{equation}
\label{Eq: partial}
    \frac{\partial \eta^3_{kt}(c_t)}{\partial z_{kt}^2 \partial z_{l,t-1}} = \frac{\partial \left(-\frac{1}{\sigma^2(\mathbf{z}_{t-1}, c_t)}\right)}{\partial z_{l,t-1}}.
\end{equation}
If \(\sigma\) is constant in terms of \(z_{l,t-1}\) which is always true in additive Gaussian noise case, then Eq. \eqref{Eq: partial} is always zero, which violates the sufficient variability. However, if it is not additive Gaussian noise, in general Eq. \eqref{Eq: partial} is non-zero. Then as long as the variance term changes sufficiently, the sufficient variability assumption holds.
\section{Implementation Details}\label{ap:implementation}
\subsection{Reproducibility}
All experiments are done in a GPU workstation with CPU: Intel i7-13700K, GPU: NVIDIA RTX 4090, Memory: 128 GB. The code can be found via \url{https://github.com/xiangchensong/nctrl}.
\subsection{Prior Likelihood Derivation}\label{ap:derive}
Let us start with an illustrative example of stationary latent causal processes consisting of two time-delayed latent variables, i.e., $\mathbf{z}_t = [z_{1,t}, z_{2,t}]$ with maximum time lag $L=1$, i.e., $z_{i,t} = f_i(\mathbf{z}_{t-1}, \epsilon_{i,t})$ with mutually independent noises. Let us write this latent process as a transformation map $\mathbf{f}$ (note that we overload the notation $f$ for transition functions and for the transformation map):

\begin{equation}
    \begin{bmatrix}
    z_{1,t-1} \\
    z_{2,t-1} \\
    z_{1,t} \\
    z_{2,t} \\
    \end{bmatrix} 
    =\mathbf{f} \left(
    \begin{bmatrix}
    z_{1,t-1} \\
    z_{2,t-1} \\
    \epsilon_{1,t} \\
    \epsilon_{2,t}
    \end{bmatrix}
    \right).
\end{equation}

By applying the change of variables formula to the map $\mathbf{f}$, we can evaluate the joint distribution of the latent variables $p(z_{1,t-1}, z_{2,t-1}, z_{1,t},z_{2,t})$ as:
\begin{equation}\label{eq:cvt-1}
p(z_{1,t-1}, z_{2,t-1}, z_{1,t},z_{2,t}) = p(z_{1,t-1}, z_{2,t-1}, \epsilon_{1,t},\epsilon_{2,t}) / \left|\det \mathbf{J}_\mathbf{f}\right|,
\end{equation}
where $\mathbf{J}_\mathbf{f}$ is the Jacobian matrix of the map $\mathbf{f}$, which is naturally a low-triangular matrix:
$$
\mathbf{J}_\mathbf{f} = 
\begin{bmatrix}
1 & 0 & 0 & 0\\
0 & 1 & 0 & 0\\
\frac{\partial z_{1,t}}{\partial z_{1,t-1}} & \frac{\partial z_{1,t}}{\partial z_{2,t-1}} & \frac{\partial z_{1,t}}{\partial \epsilon_{1,t}} & 0 \\ 
\frac{\partial z_{2,t}}{\partial z_{1,t-1}} & \frac{\partial z_{2,t}}{\partial z_{2,t-1}} & 0 & \frac{\partial z_{2,t}}{\partial \epsilon_{2,t}}
\end{bmatrix}.
$$
Given that this Jacobian is triangular, we can efficiently compute its determinant as $\prod_i \frac{\partial z_{i,t}}{\partial \epsilon_{i,t}}$. Furthermore, because the noise terms are mutually independent, and hence $\epsilon_{i,t} \perp \epsilon_{j,t}$ for $j \neq i$ and $\epsilon_t \perp \mathbf{z}_{t-1}$, we can write the RHS of Eq.~\ref{eq:cvt-1} as:

\begin{equation}\label{eq:example}
\begin{aligned}
    p(z_{1,t-1}, z_{2,t-1}, z_{1,t},z_{2,t}) &= p(z_{1,t-1}, z_{2,t-1}) \times p(\epsilon_{1,t},\epsilon_{2,t}) / \left|\det \mathbf{J}_\mathbf{f}\right| \quad (\text{because }\epsilon_t \perp \mathbf{z}_{t-1})\\
    &= p(z_{1,t-1}, z_{2,t-1}) \times \prod_i p(\epsilon_{i,t}) / \left|\det \mathbf{J}_\mathbf{f}\right| \quad (\text{because }\epsilon_{1,t} \perp \epsilon_{2,t})
\end{aligned}    
\end{equation}

Finally, by canceling out the marginals of the lagged latent variables $p(z_{1,t-1}, z_{2,t-1})$ on both sides, we can evaluate the transition prior likelihood as:

\begin{equation}\label{eq:example-likelihood}
p( z_{1,t},z_{2,t} \vert z_{1,t-1}, z_{2,t-1}) = \prod_i p(\epsilon_{i,t}) / \left|\det \mathbf{J}_\mathbf{f}\right| = \prod_i p(\epsilon_{i,t}) \times \left|\det \mathbf{J}_\mathbf{f}^{-1}\right|.
\end{equation}

Now we generalize this example and derive the prior likelihood below. 

Let $\{f^{-1}_i\}_{i=1,2,3...}$ be a set of learned inverse transition functions that take the estimated latent causal variables, and output the noise terms, i.e., $\hat{\epsilon}_{i,t} = f^{-1}_{i}\left(\hat{z}_{i,t}, \{\hat{\mathbf{z}}_{t-\tau}, c_t\} \right)$. 

Design transformation $\mathbf{A} \rightarrow \mathbf{B}$ with  low-triangular Jacobian as follows:
\vspace{-2mm}
\begin{align}
\small
\underbrace{
\begin{bmatrix}
\hat{\mathbf{z}}_{t-L},
\hdots,
\hat{\mathbf{z}}_{t-1},
\hat{\mathbf{z}}_{t}
\end{bmatrix}^{\top}
}_{\mathbf{A}}
\textrm{~mapped to~}
\underbrace{
\begin{bmatrix}
\hat{\mathbf{z}}_{t-L},
\hdots,
\hat{\mathbf{z}}_{t-1},
\hat{\mathbf{\epsilon}}_{i,t}
\end{bmatrix}^{\top}
}_{\mathbf{B}},
~
with~
\mathbf{J}_{\mathbf{A} \rightarrow \mathbf{B}} = 
\begin{pmatrix}
\mathbb{I}_{nL} & 0 \\
* & \text{diag}\left(\frac{\partial f^{-1}_{i,j}}{\partial \hat{z}_{jt}}\right)
\end{pmatrix}.
\end{align}
Similar to Eq.~\ref{eq:example-likelihood}, we can obtain the joint distribution of the estimated dynamics subspace as:
\vspace{-0.5mm}
\begin{align}
&\log p(\mathbf{A}) = \underbrace{\log p\left(\hat{\mathbf{z}}_{t-L}, \hdots, \hat{\mathbf{z}}_{t-1}\right) + \sum_{j=1}^n \log p(\hat{\epsilon}_{i,t})}_\text{Because of mutually independent noise assumption} + \log \left(\lvert \det \left(\mathbf{J}_{\mathbf{A} \rightarrow \mathbf{B}}\right) \rvert \right) \label{eq:np-joint}.\\
&\log p\left(\hat{\mathbf{z}}_t \vert \{\hat{\mathbf{z}}_{t-\tau}\}_{\tau=1}^L, c_t\right) = \sum_{j=1}^n \log p(\hat{\epsilon}_{i,t}\vert c_t)+ \sum_{i=j}^n \log \Big| \frac{\partial f^{-1}_{i}}{\partial \hat{z}_{i,t}}\Big|  
\end{align}

\subsection{Derivation of ELBO}\label{ap:ELBO}
Then the second part is to maximize the Evidence Lower BOund (\textsc{ELBO}) for the VAE framework, which can be written as:

\begin{equation}
\begin{split}
\textsc{ELBO}\triangleq &\log p_{\text{data}}(\mathbf{X}) - D_{KL}(q_{\phi}(\mathbf{Z}\vert \mathbf{X})\vert\vert p_{\text{data}}(\mathbf{Z}\vert \mathbf{X}))\\
=& \mathbb{E}_{\mathbf{Z}\sim q_{\phi}(\mathbf{Z}\vert\mathbf{X)}} \log p_{\text{data}}(\mathbf{X}\vert\mathbf{Z}) - D_{KL}(q_{\phi}(\mathbf{Z}\vert \mathbf{X})\vert\vert p_{\text{data}}(\mathbf{Z}\vert \mathbf{X}))\\
=&  \mathbb{E}_{\mathbf{Z}\sim q_{\phi}(\mathbf{Z}\vert\mathbf{X)}}\log p_{\text{data}}(\mathbf{X}\vert\mathbf{Z}) - \mathbb{E}_{\mathbf{Z}\sim q_{\phi}(\mathbf{Z}\vert\mathbf{X)}} \left[\log q_{\phi}(\mathbf{Z}\vert\mathbf{X}) - \log p_{\text{data}}(\mathbf{Z}) \right]\\
=&\mathbb{E}_{\mathbf{Z}\sim q_{\phi}(\mathbf{Z}\vert\mathbf{X)}} \left[\log p_{\text{data}}(\mathbf{X}\vert\mathbf{Z}) + \underbrace{\log p_{\text{data}}(\mathbf{Z})}_{\mathbb{E}_{\mathbf{c}}\left[\sum_{t=1}^{T}\log p(\mathbf{z}_t\vert\mathbf{z}_{t-1},c_t)\right]} - \log q_{\phi}(\mathbf{Z}\vert\mathbf{X})  \right]\\
=& \mathbb{E}_{\mathbf{z}_t}\left[\underbrace{\sum_{t=1}^{T} \log p_{\text{data}}(\mathbf{x}_t\vert\mathbf{z}_t)}_{-\mathcal{L}_{\text{Recon}}} 
+ \underbrace{\mathbb{E}_{\mathbf{c}}\left[ \sum_{t=1}^{T}\log  p_{\text{data}}(\mathbf{z}_t\vert \mathbf{z}_{\text{Hx}},c_t) \right]
- \sum_{t=1}^{T}\log q_{\phi}(\mathbf{z}_t\vert\mathbf{x}_t)}_{-\mathcal{L}_{\text{KLD}}}\right]
\end{split}
\end{equation}

\subsection{Synthetic Dataset Generation}\label{ap:synthetic}
We generated two synthetic datasets (A and B) with different nonlinear mixing functions. In this section we will introduce the detailed implementation of the generation. The generation can be split into steps (1) sample $c_t$ from a Markov chain, (2) generate $\mathbf{z}_t$ with different transition functions $f_{c_t}$ with respect to $c_t$, and (3) generate observation $\mathbf{x}_t$ via mixing function $\mathbf{g}$.
\subsubsection{Sample $c_t$ from Markov chain}
We first randomly initialized a Markov chain with transition matrix $\mathbf{A}$ and sample 20,000 steps.
\subsubsection{Generation of latent variables $\mathbf{z}_t$}
We first randomly initialized $|C|=5$ different transition functions $\{f_1,f_2,\dots,f_{|C|}\}$ with different MLPs, and generate $\mathbf{z}_t = f_{c_t}(\mathbf{z}_{\text{Hx}})$. The dimensions are set to 8 for fair comparison.
\subsubsection{Generation of observations $\mathbf{x}_t$}
The difference between datasets A and B is the mixing function. We use a two-layer randomly initialized MLP for dataset A and a three-layer MLP for dataset B. For each linear layer in the MLP, we use condition number of the weight matrix to filter out ones that are not ``invertible''.
\subsection{Modified CartPole Dataset Generation}\label{ap:cartpole-dataset}
Similar to the synthetic datasets, we also sample from a Markov chain and get $c_t$. For the modified CartPole, we initialized 5 different environments which have different combinations of hyperparameters such as gravity, pole mass, etc. A detailed comparison is listed in Table~\ref{tab:cartpole_envs}.

\begin{table}[ht]
    \centering
    \caption{Different configs for different Modified CartPole environments.}
    \begin{tabular}{cccc}
    \toprule
        Environment ID & Gravity &Pole Mass & Noise Scale\\
    \midrule
        0& 9.8&0.2& 0.01\\
        1& 24.79&0.5&0.01\\
        2& 3.7&1.0&0.01\\
        3& 11.15&1.5&0.01\\
        4& 0.62 & 2.0&0.01\\
    \bottomrule
    \end{tabular}
    
    \label{tab:cartpole_envs}
\end{table}
At each time step $t$ the environment will load the corresponding hyperparameters for given $c_t$ and update the states $\mathbf{z}_t$ according to the configuration given $c_t$. The nonlinear mixing function from states to observations $\mathbf{x}_t$ is fixed by a rendering method in the gym package. 
\subsection{MoSeq Dataset}\label{ap:moseq-dataset}
In the MoSeq dataset, the observations $\mathbf{x}_t$ are taken to be the first 10 principal components of depth camera video data of mice exploring an open field. The dataset consists of 20-minute depth camera recordings of 24 mice. In preprocessing, the videos are cropped and centered around the mouse centroid and then filtered to remove recording artifacts. Finally, the preprocessed video is projected onto the top principal components to obtain a 10-dimensional time series.

\subsection{Mean Correlation Coefficient}
MCC is a standard metric for evaluating the recovery of latent factors in ICA literature. MCC first calculates the  absolute values of the correlation coefficient between every ground-truth factor against every estimated latent variable. Pearson correlation coefficients or Spearman's rank correlation coefficients can be used depending on whether componentwise invertible nonlinearities exist in the recovered factors. The possible permutation is adjusted by solving a linear sum assignment problem in polynomial time on the computed correlation matrix. 
\subsection{Network Architecture}\label{ap:arch}
We summarize our network architecture below and describe it in detail in Table~\ref{tab:arch-details} and Table~\ref{tab:arch-cnndetails}.
\begin{table}[ht]
\caption{ Architecture details. BS: batch size, T: length of time series, i\_dim: input dimension, z\_dim: latent dimension, LeakyReLU: Leaky Rectified Linear Unit.}
\label{tab:arch-details}
\resizebox{\textwidth}{!}{%
\begin{tabular*}{1.05\textwidth}{@{\extracolsep{\fill}}|lll|}

\toprule
\textbf{Configuration} & \textbf{Description} &  \textbf{Output} \\
\toprule
\toprule
\textbf{ARHMM} &  Autoregressive HMM for Synthetic Data & \\
\toprule
Input: $\mathbf{x}_{1:T}$ & Observed time series & BS $\times$ T $\times$ i\_dim \\
Emission Module & Compute $\mu_{\mathbf{z}_{t+1}},\sigma_{\mathbf{z}_{t+1}}$ & BS $\times$ T $\times$ 2 $\times$ z\_dim\\
\toprule

\toprule
\textbf{MLP-Encoder} &  Encoder for Synthetic Data & \\
\toprule
Input: $\mathbf{x}_{1:T}$ & Observed time series & BS $\times$ T $\times$ i\_dim \\
Dense & 128 neurons, LeakyReLU & BS $\times$ T $\times$ 128\\
Dense & 128 neurons, LeakyReLU & BS $\times$ T $\times$ 128 \\
Dense & 128 neurons, LeakyReLU & BS $\times$ T $\times$ 128 \\
Dense & Temporal embeddings & BS $\times$ T $\times$ z\_dim \\
\toprule
\toprule
\textbf{MLP-Decoder} & Decoder for Synthetic Data & \\
\toprule
Input: $\hat{\mathbf{z}}_{1:T}$ & Sampled latent variables & BS $\times$ T $\times$ z\_dim \\
Dense & 128 neurons, LeakyReLU & BS $\times$ T $\times$ 128 \\
Dense & 128 neurons, LeakyReLU & BS $\times$ T $\times$ 128 \\
Dense & i\_dim neurons, reconstructed $\mathbf{\hat{x}}_{1:T}$ & BS $\times$ T $\times$ i\_dim \\
\toprule
\toprule
\textbf{Factorized Inference Network} & Bidirectional Inference Network & \\
\toprule
Input & Sequential embeddings & BS $\times$ T $\times$ z\_dim \\

Bottleneck & Compute mean and variance of posterior & $\mathbf{\mu}_{1:T}, \mathbf{\sigma}_{1:T}$ \\
Reparameterization & Sequential sampling & $\hat{\mathbf{z}}_{1:T}$ \\
\toprule
\toprule

\textbf{Prior Network} & Nonlinear Transition Prior Network & \\
\toprule
Input & Sampled latent variable sequence $\hat{\mathbf{z}}_{1:T}$ & BS $\times$ T $\times$ z\_dim \\
InverseTransition & Compute estimated residuals $\hat{\epsilon}_{it}$ & BS $\times$ T $\times$ z\_dim \\
JacobianCompute & Compute $\log \left(\lvert \det \left(\mathbf{J}\right) \rvert \right)$ & BS\\
\bottomrule
\end{tabular*}
}
\end{table}

\begin{table}[ht]
\caption{ Architecture details on CNN encoder and decoder. BS: batch size, T: length of time series, h\_dim: hidden dimension, z\_dim: latent dimension, F: number of filters, (Leaky)ReLU: (Leaky) Rectified Linear Unit.}
\label{tab:arch-cnndetails}
\resizebox{\textwidth}{!}{%
\begin{tabular*}{1.05\textwidth}{@{\extracolsep{\fill}}|lll|}
\toprule
\textbf{Configuration} & \textbf{Description} &  \textbf{Output} \\
\toprule
\toprule
\textbf{CNN-Encoder} & Feature Extractor & \\
\toprule
Input: $\mathbf{x}_{1:T}$ & RGB video frames & BS $\times$ T $\times$ 3 $\times$ 64 $\times$ 64 \\
Conv2D & F: 32, BatchNorm2D, LeakyReLU & BS $\times$ T $\times$ 32 $\times$ 64 $\times$ 64 \\
Conv2D & F: 32, BatchNorm2D, LeakyReLU & BS $\times$ T $\times$ 32 $\times$ 32 $\times$ 32 \\
Conv2D & F: 32, BatchNorm2D, LeakyReLU & BS $\times$ T $\times$ 32 $\times$ 16 $\times$ 16 \\
Conv2D & F: 64, BatchNorm2D, LeakyReLU & BS $\times$ T $\times$ 64 $\times$ 8 $\times$ 8 \\
Conv2D & F: 64, BatchNorm2D, LeakyReLU & BS $\times$ T $\times$ 64 $\times$ 4 $\times$ 4 \\
Conv2D & F: 128, BatchNorm2D, LeakyReLU & BS $\times$ T $\times$ 128 $\times$ 1 $\times$ 1 \\
Dense & F: 2 * z\_dim = dimension of hidden embedding & BS $\times$ T $\times$ 2 * z\_dim \\
\toprule
\toprule
\textbf{CNN-Decoder} & Video Reconstruction & \\
\toprule
Input: $\mathbf{z}_{1:T}$ & Sampled latent variable sequence & BS $\times$ T $\times$ z\_dim \\
Dense & F: 128 , LeakyReLU & BS $\times$ T $\times$ 128 $\times$ 1 $\times$ 1\\
ConvTranspose2D & F: 64, BatchNorm2D, LeakyReLU & BS $\times$ T $\times$ 64 $\times$ 4 $\times$ 4 \\
ConvTranspose2D & F: 64, BatchNorm2D, LeakyReLU & BS $\times$ T $\times$ 64 $\times$ 8 $\times$ 8 \\
ConvTranspose2D & F: 32, BatchNorm2D, LeakyReLU & BS $\times$ T $\times$ 32 $\times$ 16 $\times$ 16 \\
ConvTranspose2D & F: 32, BatchNorm2D, LeakyReLU & BS $\times$ T $\times$ 32 $\times$ 32 $\times$ 32 \\
ConvTranspose2D & F: 32, BatchNorm2D, LeakyReLU & BS $\times$ T $\times$ 32 $\times$ 64 $\times$ 64 \\
ConvTranspose2D & F: 3, estimated scene $\mathbf{\hat{x}}_{1:T}$ & BS $\times$ T $\times$ 3 $\times$ 64 $\times$ 64 \\
\bottomrule
\end{tabular*}
}
\end{table}

\section{Note on updated proof for identifiability of the nonstationary part}
The original proof was based on the identifiability result of hidden Markov model in \cite{gassiat2016inference}, in which the proof leverages the uniqueness of tensor decomposition from the three-way array in Kruskal's method. In autoregressive version of the hidden Markov model, the construction of the three-way array is ill-defined. We modified the proof by leveraging the identifiability result in \cite{balsells-rodas2023on}. The authors thank Yingzhen Li and Carles Balsells-Rodas for pointing out this issue and helping to modify the proof.
\end{document}